\documentclass[referee,sn-aps]{sn-jnl}
\linespread{1.3}

\usepackage{nicefrac}
\usepackage{graphicx}%
\usepackage{multirow}%
\usepackage{makecell}
\usepackage{amsmath,amssymb,amsfonts}%
\usepackage{amsthm}%
\usepackage{mathrsfs}%
\usepackage[title]{appendix}%
\usepackage{xcolor}%
\usepackage{textcomp}%
\usepackage{manyfoot}%
\usepackage{booktabs}%
\usepackage{algorithm}%
\usepackage{algorithmicx}%
\usepackage{nicefrac}
\usepackage{algpseudocode}%
\usepackage{listings}%
\usepackage{booktabs}
\usepackage{graphicx}
\usepackage{lineno}
\usepackage{subcaption}
\usepackage[frozencache,cachedir=minted-cache]{minted}
\usepackage{xcolor}
\definecolor{codegray}{HTML}{f5f5f5}
\setminted[python]{    
    linenos=true,
    fontsize=\scriptsize,
    frame=leftline,
    framesep=2mm,
    bgcolor=codegray,
    breaklines,
    breakanywhere,
    baselinestretch=1
}
\usepackage{tabularx}

\newtheorem{theorem}{Theorem}%

\newtheorem{example}{Example}%
\newtheorem{corollary}{Corollary}

\raggedbottom
\begin{document}

\title{PINNIES: An Efficient Physics-Informed Neural Network Framework to Integral Operator Problems}

\author*[1]{\fnm{Alireza} \sur{Afzal Aghaei}}\email{alirezaafzalaghaei@gmail.com}

\author[2]{\fnm{Mahdi} \sur{Movahedian Moghaddam}}\email{m\_movahedian@sbu.ac.ir}

\author[3]{\fnm{Kourosh} \sur{Parand}}\email{k\_parand@sbu.ac.ir}

{
\affil[1]{\small Independent Researcher, \state{Isfahan}, \country{Iran}}

\affil[2]{\small\orgdiv{Department of Computer and Data Sciences}, \orgname{Shahid Beheshti University}, \orgaddress{ \city{Tehran}, \state{Tehran}, \country{Iran}}}

\affil[3]{\small\orgdiv{Department of Statistics and Actuarial Science}, \orgname{University of Waterloo}, \orgaddress{ \city{Waterloo}, \state{Ontario}, \country{Canada}}}
}
\abstract{
This paper introduces an efficient tensor-vector product technique for the rapid and accurate approximation of integral operators within physics-informed deep learning frameworks. Our approach leverages neural network architectures to evaluate problem dynamics at specific points, while employing Gaussian quadrature formulas to approximate the integral components, even in the presence of infinite domains or singularities. We demonstrate the applicability of this method to both Fredholm and Volterra integral operators, as well as to optimal control problems involving continuous time. Additionally, we outline how this approach can be extended to approximate fractional derivatives and integrals and propose a fast matrix-vector product algorithm for efficiently computing the fractional Caputo derivative.
In the numerical section, we conduct comprehensive experiments on forward and inverse problems. For forward problems, we evaluate the performance of our method on over 50 diverse mathematical problems, including multi-dimensional integral equations, systems of integral equations, partial and fractional integro-differential equations, and various optimal control problems in delay, fractional, multi-dimensional, and nonlinear configurations. For inverse problems, we test our approach on several integral equations and fractional integro-differential problems. Finally, we introduce the \texttt{pinnies} Python package to facilitate the implementation and usability of the proposed method.
}

\keywords{Physics-informed neural network, Integral equation, Optimal control, Fractional calculus, Inverse problems}

\pacs[MSC Classification]{68T07, 65R20, 65R32, 49M37, 26A33}

\maketitle

\section{Introduction}\label{sec1}

Mathematical problems have been a cornerstone of scientific and technological advancements, offering a framework to model, analyze, and solve a wide range of real-world challenges \cite{mainardi2022fractional}. From the intricacies of quantum mechanics to the complexities of economic systems, mathematical formulations provide a precise and systematic approach to understanding and predicting phenomena. The applications of mathematical problems are diverse and far-reaching, encompassing fields such as physics, engineering, biology, and finance \cite{meerschaert2013mathematical, polyanin2008handbook,aghaei2023solving}. 

A significant subset of mathematical problems is defined by integral operators, which play a crucial role in various disciplines \cite{polyanin2008handbook,wazwaz2011linear}. Integral equations (IEs), for example, are used to model problems where the unknown function appears under an integral sign. These equations are particularly valuable in fields such as electromagnetism, fluid dynamics, and quantum mechanics, where they aid in describing scattering phenomena and situations where the rate of change of a quantity depends on both its current state and its history \cite{wazwaz2011linear, colton2013integral}. A more illustrative example is found in fields like viscoelasticity, where the stress in a material depends on the strain history, and in control theory, where systems with memory effects are modeled \cite{mainardi2022fractional, podlubny1998fractional}. Optimal control problems, another class of problems involving integral operators, are essential in control theory and optimization. Here, the goal is to find the best control strategy that minimizes or maximizes a given performance criterion, often formulated as an integral functional. Such problems are critical in engineering, economics, and operations research, where they are applied to optimize processes, manage resources, and make strategic decisions \cite{lewis2012optimal}.

Forward and inverse problems represent another critical area of mathematical inquiry, particularly in the context of applied sciences and engineering. Forward problems involve determining the outcome or behavior of a system based on a known set of parameters or initial conditions. These problems are typically well-posed, meaning they have a unique solution that depends continuously on the input data. For example, in heat conduction, given the initial temperature distribution and boundary conditions, the forward problem seeks to predict the temperature at any future time. In contrast, inverse problems are concerned with determining the unknown causes or parameters from observed data. These are often ill-posed, as small changes in the data can lead to large variations in the solution, making them more challenging to solve. Inverse problems are ubiquitous in areas such as medical imaging, geophysics, and machine learning, where they are used to reconstruct images from measurements, infer the structure of the Earth’s interior from seismic data, or deduce model parameters from observed outcomes \cite{ito2014inverse}.

Recently, physics-informed deep learning models have emerged as powerful tools for solving complex mathematical equations, particularly differential equations that are prevalent in scientific and engineering problems in forward and inverse forms. These approaches leverage the strengths of neural networks to approximate solutions to differential equations by incorporating physical laws and constraints directly into the learning process. Unlike traditional numerical methods, which often require significant computational resources and can struggle with high-dimensional problems, physics-informed neural networks (PINNs) offer a flexible and scalable alternative \cite{RAISSI2019686}. By embedding the governing equations of a system, such as conservation laws, fluid dynamics, or quantum mechanics, into the loss function of a neural network, PINNs ensure that the learned solution adheres to the underlying physical principles. This integration not only enhances the accuracy and reliability of the models but also enables the solution of forward, inverse, and hybrid problems that are otherwise intractable using conventional methods. As a result, these models have been successfully applied to a wide range of problems, from fluid flow \cite{cai2021physics} and heat transfer \cite{cai2021physicsheat} to electromagnetic \cite{zhang2024solution, zhang2022physics} wave propagation, and material science, demonstrating their potential to revolutionize the way mathematical equations are solved in various domains.

These advanced approaches are made possible by the implementation of automatic differentiation, a technique that allows the computation of derivatives with respect to all input variables of a neural network efficiently and accurately \cite{baydin2018automatic}. Automatic differentiation, integrated within modern deep learning frameworks, provides a seamless way to calculate gradients, which are essential for training neural networks and optimizing loss functions that include differential equations. This capability is particularly crucial for PINNs, where the loss function often involves derivatives of the neural network output with respect to its inputs, representing the differential operators in the governing equations. This method not only reduces the complexity of implementing traditional differentiation techniques but also enhances the accuracy, performance, and scalability of deep learning models, enabling the solution of highly complex and nonlinear differential equations that arise in real-world applications.

In contrast to automatic differentiation, which computes the exact, analytical derivative of a function, there is no known method for automatic integration that can directly handle integral operators within the same framework. This presents a significant challenge when dealing with problems that involve integral operators, as these require the computation of integrals that cannot be automatically differentiated. To address this limitation in PINN frameworks, researchers have developed and employed various techniques to approximate these integrals. Numerical integration methods, such as Newton-Cotes, Gaussian quadrature, and Monte Carlo integration, are among the most commonly used approaches for solving integral equations. In the context of optimal control problems, which often involve integral cost functions, additional mathematical methods, such as the Hamiltonian or Euler-Lagrange formulations, are sometimes employed to reformulate the problem and eliminate the need for direct integration. A comprehensive review of methods for solving integral equations and optimal control problems is summarized in Tables \ref{tbl:review-ie} and \ref{tbl:review-oc}, respectively.

In the papers reviewed in these tables, beyond the standard ordinal and partial derivatives that can be computed using automatic differentiation, there is another type of differentiation known as fractional differentiation, where the order of the derivative is not an integer. Fractional derivatives and integrals, which fall under the broader umbrella of fractional calculus, represent a newly developed and rapidly growing branch of mathematics \cite{mainardi2022fractional}. Their appeal lies in their flexibility, allowing for more accurate modeling of complex, real-world phenomena that cannot be adequately captured by traditional integer-order calculus. For instance, fractional calculus has been applied in areas such as viscoelastic material modeling, where it can describe the material's behavior more precisely than classical models. Similarly, in signal processing, fractional differentiation helps in enhancing signal accuracy and filtering, providing more refined tools for analysis \cite{bagley1983theoretical, METZLER20001}.

In fractional calculus, the definitions of derivatives and integrals are not unique, and each problem may require a specific fractional derivative or integral definition tailored to its particular context. However, most of these definitions are expressed in terms of an integral operator, making their computation particularly challenging. Consequently, the papers reviewed often employ approximations to fractional derivatives when training neural networks, allowing for more practical implementation despite the inherent complexity of fractional calculus.

In this paper, we aim to develop an efficient method for accurately approximating integral operators, as well as fractional derivative operators, for solving mathematical problems involving these operators. Specifically, our contributions are as follows:

\begin{itemize}
    \item Proposing an efficient matrix-vector and tensor-vector product method for approximating integral operators using Gaussian quadrature techniques.
    \item Solving a range of Fredholm and Volterra integral and integro-differential equations, including multi-dimensional and systems of equations.
    \item Solving the well-known Volterra's population model of fractional order.
    \item Solving a set of different optimal control problems, including those with delay terms, integro-differential constraints, and fractional derivatives.
    \item Solving inverse integro-differential equations of Fredholm and Volterra types with potential fractional derivative orders.
    \item Developing a Python package for easy implementation of the proposed methods.
\end{itemize}

Our proposed method significantly advances the state-of-the-art in solving integral and integro-differential equations using physics-informed neural networks. While much of the existing literature has primarily focused on forward problems \cite{effati2012neural,mosleh2014numerical,jafarian2015artificial,asady2014utilizing,chaharborj2017study,guo2021physics,lu2021deepxde,guan2022solving,zhang2022physics,saneifard2022extended,martire2022fractional,lu2023approximate,allahviranloo2023application,shao2023feedforward,ruocco2023efficient,sun2023binn,firoozsalari2023deepfdenet,zhang2024solution,jiang2024deep,georgiou2024fredholm}, our approach addresses both forward and inverse problems, aligning with recent efforts by \cite{yuan2022pinn, zhang2024pinn, saadat2024unifides, li2024machine}. However, we extend these capabilities to a broader spectrum of equations, including fractional, Volterra, Fredholm, systems of equations, and multi-dimensional integro-differential equations.

A key distinguishing feature of our method is its ability to handle problems on infinite intervals and those with singularities, which are not explicitly addressed in most previous works. This capability, combined with our treatment of fractional operators, sets our approach apart from earlier studies such as \cite{chaharborj2017study, saneifard2022extended, martire2022fractional, allahviranloo2023application, saadat2024unifides}, which were limited to specific types of fractional equations.

Our use of Gaussian quadrature for numerical integration provides a more robust and versatile approach compared to methods relying on analytical integration \cite{effati2012neural, jafarian2015artificial, chaharborj2017study, lu2023approximate, allahviranloo2023application, saneifard2022extended} or simpler numerical techniques like the trapezoidal rule \cite{saadat2024unifides, jiang2024deep}. While some recent works such as \cite{asady2014utilizing, martire2022fractional, shao2023feedforward, firoozsalari2023deepfdenet, sun2023binn} have also employed Gaussian quadrature, our method applies it more comprehensively across a wider range of problem types. Furthermore, unlike approaches that utilize specialized neural network architectures such as fuzzy neural networks or extreme learning machines \cite{mosleh2014numerical, lu2023approximate, bassi2024learning}, our method employs standard Multi-Layer Perceptrons, making it more accessible and easier to implement within existing deep learning frameworks.

Moreover, our framework for solving optimal control problems represents a significant advancement in the field, offering a more comprehensive and versatile approach compared to existing methods. While several previous works have addressed optimal control problems using physics-informed neural networks, our approach stands out in several key aspects. Our method tackles a wide range of optimal control problems, including those with delay terms, integro-differential constraints, and fractional derivatives. This breadth of application surpasses many previous works that focused on specific types of problems, such as those dealing only with delay differential equations \cite{kheyrinataj2020fractional, kheyrinataj2023solving}, or those limited to fractional optimal control problems \cite{ghasemi2017nonlinear, sabouri2017neural, yavari2019fractional, kheyrinataj2020fractional, kheyrinataj2023solving}. We employ Gaussian quadrature for numerical integration, which offers higher accuracy and flexibility compared to methods using simpler techniques like the midpoint rule \cite{mowlavi2023optimal} or those relying on Simpson's rule \cite{kheyrinataj2023solving,sabouri2017neural}. Our approach effectively handles complex constraints, including those involving integral operators and fractional derivatives, a capability not explicitly addressed in many previous works. Furthermore, our method is capable of solving multi-dimensional optimal control problems, a feature not explicitly mentioned in several of the listed works.

Finally, our development of a Python package for implementing these methods enhances the accessibility and usability of our approach. While packages like DeepXDE exist, our solution offers greater flexibility for high-dimensional and fractional integro-differential equations. A detailed comparison with DeepXDE will be presented later in the paper to demonstrate the effectiveness and advantages of our approach.

In the following sections, we first discuss numerical integration methods, placing a particular emphasis on Gaussian quadrature techniques. Subsequently, in Section 3, we introduce the method for approximating various integral and fractional operators. Section 4 presents a series of experiments conducted on forward and inverse integral equations, as well as on optimal control problems. In Section 5, we showcase our developed Python package designed for solving integral equations. Finally, Section 6 offers concluding remarks.

\begin{table}[ht]
\resizebox{\textwidth}{!}{%
\begin{tabular}{@{}cccccc@{}}
\toprule
Paper & Year & Task & Equation Type & Model & Integration Technique \\ \midrule
\cite{effati2012neural} & 2012 & Fwd. & Fredholm IE & MLP & Analytical \\
\cite{mosleh2014numerical} & 2014 & Fwd. & Fuzzy Fredholm IDE & FNN & Newton-Cotes \\
\cite{jafarian2015artificial} & 2014 & Fwd. & Systems of Volterra IE & MLP & Analytical \\
\cite{asady2014utilizing} & 2014 & Fwd. & 2D Fredholm IE & MLP & Gauss-Legendre \\
\cite{chaharborj2017study} & 2017 & Fwd. & Fractional Volterra IDE & MLP & Analytical \\
\cite{guo2021physics} & 2021 & Fwd. & Fredholm IE & MLP & Method of Moments \\
\cite{lu2021deepxde} & 2021 & Fwd. & Volterra IE & MLP & Gauss-Legendre \\
\cite{guan2022solving} & 2022 & Fwd. & Fredholm IE & MLP & Monte Carlo \\
\cite{zhang2022physics} & 2022 & Fwd. & Fredholm IE& MLP & Method of Moments \\
\cite{yuan2022pinn} & 2022 & Fwd., Inv. & \begin{tabular}[c]{@{}c@{}}Volterra, Fredholm, Systems\\ Multi-Dimensional IDE\end{tabular} & MLP & Auxiliary Variable \\
\cite{saneifard2022extended} & 2022 & Fwd. & Fractional 2D Volterra IDE & MLP & Analytical \\
\cite{martire2022fractional} & 2022 & Fwd. & Fractional Volterra IE& MLP & Gauss-Legendre \\
\cite{lu2023approximate} & 2023 & Fwd. & Volterra, Fredholm IE& ELM & Analytical \\
\cite{allahviranloo2023application} & 2023 & Fwd. & Fractional IDE & MLP & Analytical \\
\cite{shao2023feedforward} & 2023 & Fwd. & Systems of Partial IDE & MLP & Gauss-Legendre \\
\cite{ruocco2023efficient} & 2023 & Fwd. & Fredholm IE & MLP & Gaussian \\
\cite{sun2023binn} & 2023 & Fwd. & Fredholm IE & MLP & Piece-wise Gaussian \\
\cite{firoozsalari2023deepfdenet} & 2023 & Fwd. & Volterra IE & MLP & Gauss-Legendre \\
\cite{zhang2024solution} & 2024 & Fwd. & Fredholm IE& MLP & Method of Moments \\
\cite{zhang2024pinn} & 2024 & Fwd., Inv. & Multi-Dimensional Fredholm IDE & MLP & Gauss-Legendre \\
\cite{saadat2024unifides} & 2024 & Fwd., Inv. & \begin{tabular}[c]{@{}c@{}}Fractional Fredholm, Volterra,\\ Systems, IDE\end{tabular} & MLP & Trapezoidal \\
\cite{jiang2024deep} & 2024 & Fwd. & Fredholm IE & MLP & Compound Trapezoidal \\
\cite{li2024machine} & 2024 & Fwd., Inv. & \begin{tabular}[c]{@{}c@{}}Volterra, Fredholm, Systems,\\ Multi-Dimensional, IDE\end{tabular} & MLP & Auxiliary Variable \\
\cite{georgiou2024fredholm} & 2024 & Fwd. & Fredholm IE & MLP & Fixed Point Iteration \\
Ours & 2024 & Fwd., Inv. & \begin{tabular}[c]{@{}c@{}}Fractional, Volterra, Fredholm, Systems\\ Multi-Dimensional IDE, \\ Infinite Interval, Singular\end{tabular} & MLP & Gaussian Quadrature \\
\bottomrule
\end{tabular}%
}
\caption{Comparison of data-driven approaches for solving various types of integral and integro-differential equations in recent years. Most methods focus primarily on forward problems, particularly with fractional derivatives. Some methods utilize symbolic computation in numerical software to evaluate integrals, while others employ numerical integration techniques like Newton-Cotes and Gaussian quadrature, as implemented in Extreme Learning Machines (ELM), Fuzzy Neural Networks (FNN), Multi-Layer Perceptrons (MLP), or Recurrent Neural Networks (RNN). In Section 2 of the paper, we will explain these numerical integration methods.}
\label{tbl:review-ie}
\end{table}

\begin{table}[ht]
\resizebox{\textwidth}{!}{%
\begin{tabular}{@{}cccccc@{}}
\toprule
Authors & Year & Time & Equation Type & Model & Integration Technique \\ \midrule
\cite{becerikli2003intelligent} & 2003 & Continuous & Ordinal & MLP & Runge-Kutta-Butcher \\
\cite{liu2012neural} & 2012 & Discrete & Ordinal & MLP & - \\
\cite{effati2013optimal} & 2013 & Continuous & Ordinal & MLP & Hamiltonian \\
\cite{ghasemi2017nonlinear} & 2017 & Continuous & Nonlinear, Fractional & MLP & Hamiltonian \\
\cite{sabouri2017neural} & 2017 & Continuous & Fractional & MLP & Simpson's Rule \\
\cite{yavari2019fractional} & 2019 & Continuous & Fractional, Infinite-Horizon & MLP & Hamiltonian \\
\cite{mortezaee2020infinite} & 2019 & Continuous & Ordinal, Infinite-Horizon & MLP & Hamiltonian \\
\cite{kheyrinataj2020fractional} & 2020 & Continuous & Delay, Fractional & MLP & Hamiltonian \\
\cite{benning2021deep} & 2021 & Continuous & Ordinal & - & Hamiltonian \\
\cite{bottcher2022ai} & 2022 & Continuous & Ordinal & MLP & Hamiltonian \\
\cite{barry2022physics} & 2022 & Continuous & Partial & MLP & Hamiltonian \\
\cite{chen2018optimal} & 2018 & Discrete & Ordinal & RNN & - \\
\cite{sanchez2018real} & 2018 & Continuous & Ordinal & MLP & Numerical Integration \\
\cite{yin2023optimal} & 2023 & Continuous & Ordinal & MLP & Gaussian Quadrature \\
\cite{dai2023solving} & 2023 & Continuous & Ordinal & MLP & Hamiltonian \\
\cite{mowlavi2023optimal} & 2023 & Continuous & Partial Differential & MLP & Midpoint Rule \\
\cite{kheyrinataj2023solving} & 2023 & Continuous & Delay, Fractional & MLP & Simpson's Rule \\
\cite{nzoyem2023comparison}  &  2023 & Continuous & Partial & MLP & Adjoint Method \\
\cite{na2024physics} & 2024 & Continuous & Ordinal & MLP & Euler–Lagrange \\
\cite{yin2024aonn}  & 2024 & Continuous & Partial & MLP & Hamiltonian \\
Ours & 2024 & Continuous & \begin{tabular}[c]{@{}c@{}}Ordinal, Partial, Fractional,\\ Multi-Dimensional, IDE, Delay\\ Nonlinear\end{tabular} & MLP & Gaussian Quadrature \\
\bottomrule
\end{tabular}%
}
\caption{A review of neural network methods for solving optimal control problems. Due to the absence of automatic integration, most of these studies leveraged the mathematical properties of the systems, formulating the problems using Hamiltonian or Euler–Lagrange methods to eliminate the need for integration. Others applied numerical integration techniques for simpler problems.}
\label{tbl:review-oc}
\end{table}

\section{Gaussian Quadrature}

Numerical quadrature is a fundamental technique in numerical analysis for approximating the definite integral of a function. Various methods have been developed to address this problem, which can be broadly classified into Newton-Cotes methods, Gaussian quadrature, statistical techniques, and adaptive strategies. Newton-Cotes methods, including the trapezoidal rule and Simpson's rule, are among the earliest approaches for estimating the integral of a function over a specified domain. Gaussian quadrature, while similar in approach to Newton-Cotes formulas, offers more precise approximations. Statistical methods, such as the Monte Carlo method and Bayesian quadrature, approximate the integral by adopting a probabilistic perspective, accounting for uncertainty in the solution. Adaptive methods represent another strategy for approximating integrals, particularly for stiff or irregular functions, by dividing the integration into subintervals and applying the aforementioned static methods.

Among these methods, Newton-Cotes and Gaussian quadrature possess robust mathematical foundations and theoretical support. In these approaches, the integral of a function \( u(x) \) over the interval \([a, b]\) is approximated by a linear weighted sum:
\[
\mathcal{I}(u) = \int_{a}^{b} u(x) \omega(x) \, dx \approx \mathbf{u}^\top \mathbf{w} = \sum_{i=0}^{n} w_i u(x_i),
\]
where \(n\) represents the number of quadrature points, \(\omega(x)\) is a weight function, \(w_i\) are the quadrature weights, and \(x_i \in [a, b]\) are the nodes. The Newton-Cotes method assumes \(\omega(x) = 1\) and that the nodes \(x_i\) are equally spaced, specifically \(x_i = a + ih\), where \(h = (b-a)/n\). The weights \(w_i\) are computed by analytically integrating the Lagrange basis polynomials. However, this approach has certain limitations: the nodes \(x_i\) must be equidistant, and the use of Lagrange polynomials makes the method vulnerable to Runge's phenomenon, potentially leading to inaccurate integration. Additionally, this method is proven to be exact only for polynomials of degree at most \(n\).

Conversely, Gaussian quadrature allows both the weights and nodes to be variable, increasing the degrees of freedom to \(2n+2\). This flexibility enables the method to compute the weights and nodes in such a way that the integration is exact for polynomials of degree up to \(2n+1\). In determining these unknowns, a relationship between the weights and nodes and orthogonal polynomials emerges. For instance, in Gaussian quadrature with \([a, b] = [-1, 1]\) and \(\omega(x) = 1\), the nodes \(x_i\) are the roots of the Legendre polynomials, and the corresponding weights \(w_i\) can be computed accordingly. When dealing with a finite interval other than \([-1, 1]\), the same method can be applied with a simple transformation:
\begin{equation*}
    \mathcal{I}(u) = \int_{a}^{b} u(x)\omega(x) \, dx \approx \frac{b-a}{2} \sum_{i=0}^{n} w_{i} u\left(\frac{b-a}{2}x_{i} + \frac{a+b}{2}\right).
\end{equation*}

Similarly, for various orthogonal polynomials associated with their respective weight functions \(\omega(x)\), the nodes \(x_i\) and weights \(w_i\) can be systematically determined. Table \ref{tbl:gauss_quadrature} provides a summary of some prominent Gaussian quadrature rules for integrations over finite, semi-infinite, and infinite domains. This table outlines various orthogonal polynomials, each linked to specific weight functions and integration domains. Included are Jacobi polynomials, along with their well-known successors: Legendre polynomials and the four primary types of Chebyshev polynomials. Additionally, Laguerre and Hermite polynomials are covered. These polynomials are widely utilized in Gaussian quadrature methods because their orthogonality properties ensure optimal node placements and weight calculations, facilitating accurate numerical integration. The following section outlines the definitions and key characteristics of these functions.

\paragraph{Jacobi Polynomials}
The Jacobi polynomials \( J_n^{(\alpha, \beta)}(x) \) are defined by the explicit formula \cite{aghaei2024rkan,10454075}:

\[
J_n^{(\alpha, \beta)}(x) = \frac{(\alpha+1)_n}{n!} \sum_{k=0}^{n} \binom{n}{k} \frac{(\alpha+\beta+n+1)_k}{(\alpha+1)_k} \left(\frac{x-1}{2}\right)^k,
\]
where \(\alpha, \beta > \mathbb{R}^{>-1}\) are Jacobi parameters, and \( (\alpha+1)_n \) denotes the Pochhammer symbol, which represents the rising factorial:
\[
(\alpha+1)_n = (\alpha+1)(\alpha+2)\cdots(\alpha+n).
\]
The derivatives of these functions can be computed using the following formula:
\begin{equation*}
    \frac{d}{dx} J_n^{(\alpha, \beta)}(x) = \frac{1}{2} (n + \alpha + \beta + 1) J_{n-1}^{(\alpha+1, \beta+1)}(x).
\end{equation*}
Legendre and Chebyshev polynomials are special cases of Jacobi polynomials in which \(\alpha\) and \(\beta\) take on specific values. Specifically, Legendre polynomials are given by \( P_n(x) = J_n^{(0,0)}(x) \), Chebyshev polynomials of the first kind are \( T_n(x) = J_n^{\left(-\frac{1}{2},-\frac{1}{2}\right)}(x) \), Chebyshev polynomials of the second kind are \( U_n(x) = J_n^{\left(\frac{1}{2},\frac{1}{2}\right)}(x) \), Chebyshev polynomials of the third kind are \( V_n(x) = J_n^{\left(-\frac{1}{2},\frac{1}{2}\right)}(x) \), and Chebyshev polynomials of the fourth kind are \( W_n(x) = J_n^{\left(\frac{1}{2},-\frac{1}{2}\right)}(x) \).

In all cases, an orthogonality relationship can be observed between each pair of Jacobi polynomials, described using the Euclidean inner product. Specifically, the orthogonality condition for Jacobi polynomials \(J^{(\alpha,\beta)}_n\) and \(J^{(\alpha,\beta)}_m\), as well as their successors, is given by:
\[
\langle J^{(\alpha,\beta)}_n, J^{(\alpha,\beta)}_m \rangle = \int_{-1}^{1} J^{(\alpha,\beta)}_n(x) J^{(\alpha,\beta)}_m(x) (1-x)^\alpha (1+x)^\beta \, dx = \langle J^{(\alpha,\beta)}_n, J^{(\alpha,\beta)}_n \rangle \delta_{m,n},
\]
where \(\delta_{m,n}\) is the Kronecker delta function, defined as \(1\) if \(m = n\), and \(0\) otherwise.

\paragraph{Laguerre Polynomials}

The Laguerre polynomials, \( L_n(x) \), form a sequence of orthogonal polynomials that arise in quantum mechanics, particularly in the radial part of the solution to the Schrödinger equation for a hydrogen-like atom. They are also utilized in various approximation methods and numerical analysis. The generalized Laguerre polynomials, \( L_n^{(\alpha)}(x) \), extend this concept by introducing a parameter \( \alpha \), allowing for more flexible applications and encompassing a broader range of problems. For \(\alpha > -1\), these generalized Laguerre polynomials are defined by the explicit formula:
\[
L_n^{(\alpha)}(x) = \sum_{k=0}^{n} (-1)^k \binom{n+\alpha}{n-k} \frac{x^k}{k!},
\]
and possess the orthogonality property over the semi-infinite domain:
\[
\langle L^{(\alpha)}_n, L^{(\alpha)}_m \rangle = \int_{0}^\infty L^{(\alpha)}_n(x) L^{(\alpha)}_m(x) \mathrm{e}^{-x} dx = \langle L^{(\alpha)}_n, L^{(\alpha)}_n \rangle \delta_{m,n}.
\]
The derivatives of these polynomials can be efficiently computed using:
\begin{equation*}
    \frac{d}{dx} L_n^{(\alpha)}(x) = -L_{n-1}^{(\alpha+1)}(x).
\end{equation*}

\paragraph{Hermite Polynomials}

The Hermite polynomials, \( H_n(x) \), are a sequence of orthogonal polynomials that play significant roles in probability theory, combinatorics, and quantum mechanics, where they serve as the eigenfunctions of the quantum harmonic oscillator. The explicit formula for Hermite polynomials is given by:
\[
H_n(x) = n! \sum_{k=0}^{\lfloor n/2 \rfloor} \frac{(-1)^k (2x)^{n-2k}}{k!(n-2k)!}.
\]
Their derivatives can be computed using the following relation:
\begin{equation*}
\frac{d}{dx} H_n(x) = 2n H_{n-1}(x).
\end{equation*}
These polynomials, defined on the real line, possess the following orthogonality property with respect to the weight function \(\omega(x) = \exp(-x^2)\):
\[
\langle H_n, H_m \rangle = \int_{-\infty}^\infty H_n(x) H_m(x) \mathrm{e}^{-x^2} dx = \langle H_n, H_n \rangle \delta_{m,n}.
\]

\begin{table}[ht]
\resizebox{\textwidth}{!}{%
\begin{tabular}{@{}ccccc@{}}
\toprule
\textbf{Method}                                                         & \textbf{Domain}         & \textbf{Weight} \( \omega(x) \) & \textbf{Roots} \( x_i \)                   & \textbf{Weights} \( w_i \)                                                                                                                                                                    \\ \midrule

Gauss-Legendre                                                          & \( [-1, 1] \)           & 1                               & Zeros of \( P_n(x) \)                      & \( \frac{2}{(1-x_i^2)[P'_n(x_i)]^2} \)                                                                                                                                                  \\
\begin{tabular}[c]{@{}c@{}}Gauss-Chebyshev\\ (First Kind)\end{tabular}  & \( (-1, 1) \)& \( \frac{1}{\sqrt{1-x^2}} \)    & \( \cos\left(\frac{2i-1}{2n} \pi\right) \) & \( \frac{\pi}{n} \) \\
\begin{tabular}[c]{@{}c@{}}Gauss-Chebyshev\\ (Second Kind)\end{tabular} & \( [-1, 1] \)           & \( \sqrt{1-x^2} \)              & \( \cos\left(\frac{i}{n+1} \pi\right) \)   & \( \frac{\pi}{n+1} \sin^2\left(\frac{i\pi}{n+1}\right) \) \\

\begin{tabular}[c]{@{}c@{}}Gauss-Chebyshev\\ (Third Kind)\end{tabular}  & \( [-1, 1] \)           & \( \sqrt{1+x}\sqrt{1-x^3} \)    & \( \cos\left(\frac{2i-1}{2n} \pi\right) \) & \( \frac{\pi}{n} \sqrt{1+x_i} \)                                                                                                                                                        \\
\begin{tabular}[c]{@{}c@{}}Gauss-Chebyshev\\ (Fourth Kind)\end{tabular} & \( [-1, 1] \)           & \( \sqrt{1-x}\sqrt{1-x^3} \)    & \( \cos\left(\frac{2i-1}{2n} \pi\right) \) & \( \frac{\pi}{n} \sqrt{1-x_i} \)                                                                                                                                                        \\
Gauss-Jacobi                                                            & \( (-1, 1) \)& \( (1-x)^\alpha (1+x)^\beta \)  & Zeros of \( J_n^{(\alpha, \beta)}(x) \)& \(\frac{2^{\alpha+\beta+1} 2^{\alpha+\beta}\Gamma(n+\alpha+1)\Gamma(n+\beta+1)}{(2n+\alpha+\beta+1)J^{\prime}_n(x_i)J^{\prime}_n(x_i)\Gamma(n+\alpha+\beta+1)n!}\) \\

Gauss-Laguerre                                                          & \( [0, \infty) \)       & \( x^\alpha e^{-x} \)           & Zeros of \( L_n^{(\alpha)}(x) \)           & \( \frac{x_i^{\alpha+1}}{[(n+1) L_{n+1}^{(\alpha)}(x_i)]^2} \)                                                                                                                          \\ 
Gauss-Hermite                                                           & \( (-\infty, \infty) \) & \( e^{-x^2} \)                  & Zeros of \( H_n(x) \)                      & \( \frac{2^{n-1} n! \sqrt{\pi}}{n^2 [H_{n-1}(x_i)]^2} \)                                                                                                                                \\
\bottomrule
\end{tabular}
}
\caption{Summary of various Gaussian quadrature methods, detailing the domain of integration, the weight functions \( \omega(x) \), the roots \( x_i \) of the respective orthogonal polynomials, and the corresponding weights \( w_i \) for each method. The table includes well-known methods such as Gauss-Legendre, Gauss-Chebyshev (of the first, second, third, and fourth kinds), Gauss-Jacobi, Gauss-Laguerre, and Gauss-Hermite, each optimized for specific weight functions and domains.}
\label{tbl:gauss_quadrature}

\end{table}

\section{Methodology}
\label{sec:3}
This section describes how a mathematical problem involving an integral operator can be solved using a deep learning architecture. Initially, we outline the method for approximating the solution using an MLP neural network architecture. Following this, we define the physics-informed loss function and explain how the integral component of the equation can be computed using Gaussian quadrature in a vectorized form. The section concludes with a discussion on the application of the proposed approach for approximating fractional derivatives, along with a matrix-vector product method for accurately predicting the Caputo fractional derivative.

\subsection{Approximating the Solution}
In this section, we consider a general form of a mathematical equation represented in an operator form:
\begin{equation}
    \mathcal{F}(u)(\mathbf{x}) + \mathcal{D}(u)(\mathbf{x}) + \mathcal{I}(u)(\mathbf{x}) = \mathcal{S}(\mathbf{x}),
    \label{eq:ie}
\end{equation}
given by some initial and boundary conditions, where $u(\mathbf{x})$ is the unknown solution, $\mathcal{F}(u)$ represents an algebraic function of $u$, $\mathcal{D}(u)$ is a differential operator acting on $u$, $\mathcal{I}(u)$ is an integral operator applied to $u$, and $\mathcal{S}(\mathbf{x})$ is the source function, which is defined by the independent variable $\mathbf{x} \in \mathbb{R}^d$ for a $d$-dimensional problem.

To approximate the solution to this problem, one may consider the unknown solution $u(\mathbf{x})$ by an MLP neural network:
\begin{equation*}
    \begin{aligned}
        \mathcal{A}_0 &= \mathbf{X}, & \mathbf{X} \in \mathbb{R}^{N\times d}, \\
        \mathcal{A}_i &= \sigma_i(\mathcal{A}_{i-1}\boldsymbol{\theta}^{(i)}+\mathbf{b}^{(i)}), & i = 1,2,\ldots L-1,\\
        \mathrm{MLP}(\mathbf{X}) &:= \mathcal{A}_L = \mathcal{A}_{L-1}\boldsymbol{\theta}^{(L)}+\mathbf{b}^{(L)}, & \mathcal{A}_L \in \mathbb{R}^{N\times 1}.
    \end{aligned}
\end{equation*}
Here, $\mathbf{X} \in \mathbb{R}^{N\times d}$ represents a set of $N$ training points, or collocation points, in a $d$-dimensional space, defined within the problem domain $\Delta = [a,b]$. The network weights are denoted by $\boldsymbol{\theta} \in \mathbb{R}^{h_{i-1}\times h_i}$, where $h_i$ is the number of neurons in the $i^\text{th}$ layer. The output of the $i^{\text{th}}$ layer is given by $\mathcal{A}_i$, and $\sigma_i(\cdot)$ is the activation function applied in that layer, with $L$ representing the total number of layers. Considering the output of the network, $\mathbf{u}=\mathrm{MLP}(\mathbf{X})$, that is an $N\times 1$ vector containing the approximated function in $N$ training points, one can define a residual function $\mathfrak{R}(\mathbf{X})\in\mathbb{R}^{N\times 1}$ that measures how the approximated function fits the dynamics of the problem:
\begin{equation*}
    \mathfrak{R}(\mathbf{X}) := \mathcal{F}(\mathbf{u}) + \mathcal{D}(\mathbf{u}) + \mathcal{I}(\mathbf{u}) - \mathcal{S}(\mathbf{X}).
\end{equation*}
Then, the loss function of the network should be constructed in such a way that the gradient descent algorithm minimizes the absolute value of the residual, leading to a more accurate prediction. To achieve this, the loss function can be defined in a supervised learning framework as follows:
\begin{equation}
    \mathcal{L}(\mathbf{X}) = \frac{1}{N} \mathfrak{R}(\mathbf{X})^\top \mathfrak{R}(\mathbf{X}) + \lambda^{\text{IC}} \mathrm{MSE}^{\text{IC}} + \lambda^{\text{BC}} \mathrm{MSE}^{\text{BC}} + \lambda^{\text{Data}} \mathrm{MSE}^{\text{Data}},
    \label{eq:loss}
\end{equation}
where $\mathrm{MSE}^{\text{IC}}, \mathrm{MSE}^{\text{BC}}, \mathrm{MSE}^{\text{Data}}$ are the mean squared errors (MSE) between the predicted solution by the network and the initial conditions, boundary conditions, and real-world data, respectively. The terms $\lambda^{\text{IC}}, \lambda^{\text{BC}}, \lambda^{\text{Data}}$ are positive coefficients that serve as regularization parameters.

A similar approach can be employed to solve systems of mathematical equations of the form:
\begin{equation*}
    \mathcal{F}_\iota(U)(\mathbf{x}) + \mathcal{D}_\iota(U)(\mathbf{x}) + \mathcal{I}_\iota(U)(\mathbf{x}) = \mathcal{S}_\iota(\mathbf{x}),
\end{equation*}
where $\iota = 1, 2, \ldots, M$ and $M$ is the number of equations, with $U = [u_1, u_2, \ldots, u_M]$ being the set of unknown functions in the system. In this context, the residual functions are denoted by $\mathfrak{R}_\iota(\cdot)=\mathcal{F}_\iota(\mathbf{U})(\cdot) + \mathcal{D}_\iota(\mathbf{U})(\cdot) + \mathcal{I}_\iota(\mathbf{U})(\cdot) - \mathcal{S}_\iota(\cdot)$, where $\mathbf{U}=[\mathbf{u}_1, \mathbf{u}_2, \ldots, \mathbf{u}_M]$ and $\mathbf{u}_\iota = \mathrm{MLP}_\iota(\mathbf{X})$ is the predicted solution for the $\iota^\text{th}$ equation, generated by one of the $M$ different neural network architectures. The overall loss function can then be defined as:
\begin{equation*}
    \mathcal{L}(\mathbf{X}) = \frac{1}{N \times M} \sum_{\iota=1}^M \mathfrak{R}_\iota(\mathbf{X})^\top \mathfrak{R}_\iota(\mathbf{X}) + \sum_{\iota=1}^M \left(\lambda_\iota^{\text{IC}} \mathrm{MSE}_\iota^{\text{IC}} + \lambda_\iota^{\text{BC}} \mathrm{MSE}_\iota^{\text{BC}} + \lambda_\iota^{\text{Data}} \mathrm{MSE}_\iota^{\text{Data}}\right).
\end{equation*}

In both scenarios, minimizing the loss function, or determining the optimal weights $\boldsymbol{\theta}$, can be achieved using a variety of optimization algorithms, such as gradient descent (including its variants like RMSprop, Momentum, Adam), conjugate gradient, Levenberg-Marquardt, or the Broyden–Fletcher–Goldfarb–Shanno (BFGS) methods.

For this study, we employ the Limited-memory BFGS (L-BFGS) algorithm, a quasi-Newton optimizer known for its accuracy and efficiency. L-BFGS is a widely used optimization method that offers the benefits of the BFGS algorithm while being more memory-efficient. Unlike the traditional BFGS, which requires handling dense matrices with a quadratic scaling in relation to the number of parameters, L-BFGS retains only a limited number of vectors to capture curvature information. This feature is particularly beneficial for large-scale problems, as it significantly reduces memory consumption while preserving the quasi-Newton characteristic of approximating the inverse Hessian matrix \cite{nocedal1999numerical}.

L-BFGS operates by iteratively updating an initial solution vector $\boldsymbol{\theta}_k$ using gradient information and a limited memory of past updates. The algorithm starts with an initial guess $\boldsymbol{\theta}_0$ and iteratively refines it. In each iteration, the gradient $\nabla \mathcal{L}(\boldsymbol{\theta}_k)$ is computed, and the inverse Hessian-vector product is approximated using a limited history of previous gradient differences $\mathbf{y}_j = \nabla \mathcal{L}(\boldsymbol{\theta}_j) - \nabla \mathcal{L}(\boldsymbol{\theta}_{j-1})$ and position differences $\mathbf{s}_j = \boldsymbol{\theta}_j - \boldsymbol{\theta}_{j-1}$ for $j = k-m+1, \ldots, k$. The search direction $\mathbf{p}_k$ is then computed using a two-loop recursion as
$$\mathbf{p}_k = -\mathbf{H}_k \nabla \mathcal{L}(\boldsymbol{\theta}_k),$$
where $\mathbf{H}_k$ represents the implicit inverse Hessian approximation. A line search is conducted to determine the optimal step size $\alpha_k$, and the parameter vector is updated as
$$\boldsymbol{\theta}_{k+1} = \boldsymbol{\theta}_k + \alpha_k \mathbf{p}_k.$$
In this scenario, the derivative of the loss function with respect to the weights, denoted as $\nabla \mathcal{L}(\boldsymbol{\theta}_k)$, is efficiently computed using the backpropagation algorithm, which is a specific implementation of reverse-mode automatic differentiation developed for neural networks. This technique utilizes the chain rule from calculus:
\begin{equation*}
    \begin{aligned}
        \frac{\partial \mathcal{L}}{\partial \boldsymbol{\theta}^{(i)}} &= \delta^{(i)} \cdot \left(\mathcal{A}_{i-1}\right)^\top,\\
        \delta^{(i)} &= \left(\boldsymbol{\theta}^{(i+1)}\right)^\top \delta^{(i+1)} \odot \sigma_i'\left(\mathcal{A}_{i-1} \boldsymbol{\theta}^{(i)} + \mathbf{b}^{(i)}\right),
    \end{aligned}
\end{equation*}
where $\odot$ represents the Hadamard element-wise product. A similar approach can be applied to the network for computing the derivatives of the learned function $u(\mathbf{X}) = \mathrm{MLP}(\mathbf{X})$ with respect to $\mathbf{X}$. This makes the calculation of the differential operator $\mathcal{D}(\cdot)$ straightforward. However, there is currently no widely adopted automatic integration tool for accurately computing the integral term $\mathcal{I}(\cdot)$. In the following section, we present a matrix-vector and tensor-vector product approach for efficiently computing integral operators. A high-level overview of the PINN framework for constructing the loss function in problems involving integral operators and fractional derivatives (as defined in Eq. \eqref{eq:ie}) is presented in Figure \ref{fig:schema}.

\begin{figure}
    \centering
    \includegraphics[width=0.7\textwidth]{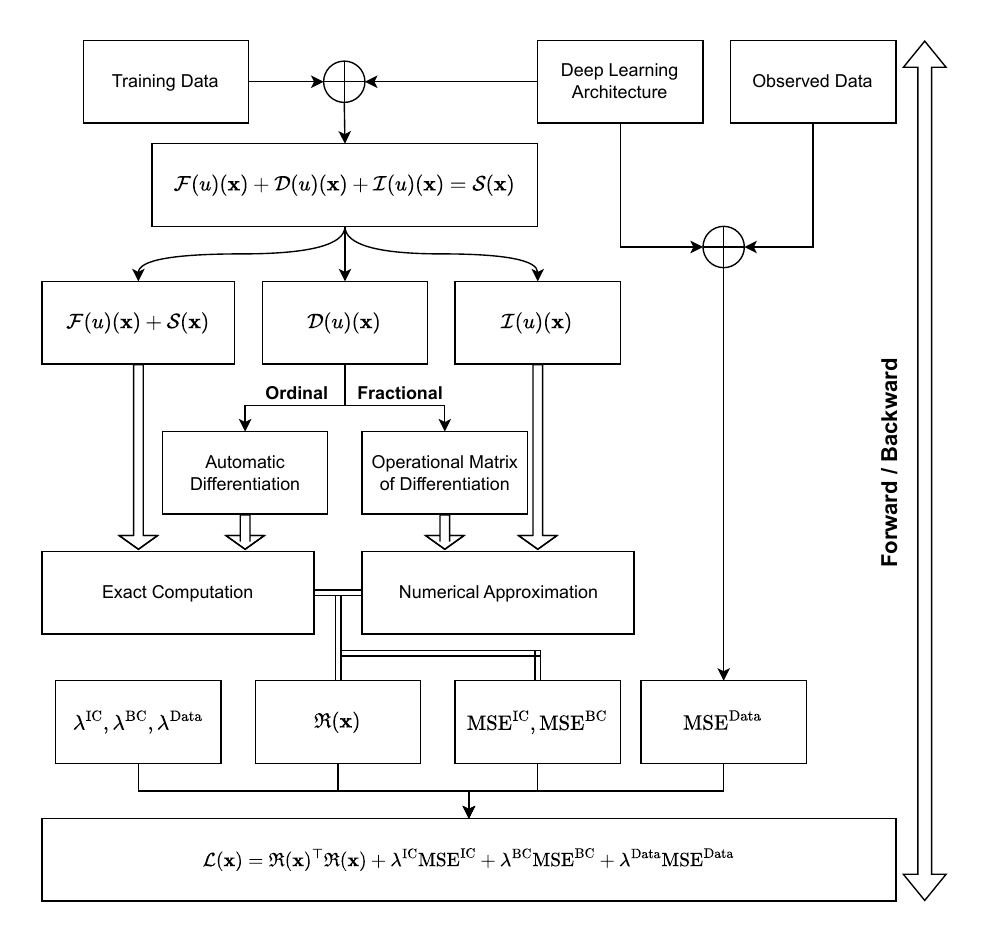}
    \caption{Diagram illustrating the proposed methodology for solving functional equations that include ordinary, partial, and fractional derivatives, as well as integral operators.}
    \label{fig:schema}
\end{figure}

\subsection{Fredholm Integral Operator}
Consider a one-dimensional Fredholm integral operator with a kernel function $\mathcal{K}(x,t)$ defined as:
\begin{equation}
    \mathcal{I}(u)(x) = \int_a^b \mathcal{K}(x,t) u(t) \, dt,
    \label{eq:fredholm_operator}
\end{equation}
where $x \in [a,b]$. Depending on the structure of the kernel function and the integration domain, one should select an appropriate method from Table \ref{tbl:gauss_quadrature} to compute this integral. Generally, the Gauss-Legendre algorithm is effective for any kernel function, although it may be less accurate than Gauss-Chebyshev for singular kernels. In any case, the computation of the Gaussian integration weights ($\mathbf{w}$) and roots ($\mathbf{r}$) is independent of the network training phase, allowing these values to be precomputed during the initialization of the architecture.

The roots $\mathbf{r}$, serving as a surrogate for the integration variable $t$, should be used to approximate the integral by evaluating the integrands $u(t)$ and $\mathcal{K}(x,t)$ at these points. For the independent variable $x$, the network training data $\mathbf{x}$ should be used. These two vectors may be identical, i.e., $\mathbf{x}=\mathbf{r}$, meaning the roots of the orthogonal polynomials are used as training data. In this case, the procedure will be very fast, as the forward phase $u(\mathbf{r})$ needs to be called only once. In any scenario, the function $I(x)$ can be defined to compute the integrand for a given $x$:
\begin{equation}
    I(x) = \left[\mathbf{k}_{N\times 1}(x) \odot \mathbf{u}_{N\times 1}\right]^\top \mathbf{w}_{N\times 1}.
\end{equation}
Here, $\mathbf{k}_j(x) = \mathcal{K}(x, \mathbf{r}_j)$, $\mathbf{u}_j = u(\mathbf{r}_j)$, and $N$ represents the number of nodal points. By computing the matrix $\mathbf{K}$ as $\mathbf{K}_{i,j} = \mathcal{K}(\mathbf{x}_i, \mathbf{r}_j)$, one can compute the integral operator via a matrix-vector product:
\begin{equation}
    \mathbf{I}_{N\times 1} = \left[\mathbf{K}_{N\times N}^\top \odot \mathbf{u}_{N\times 1}\right]_{N\times N}^\top \cdot \mathbf{w}_{N\times 1},
\end{equation}
where $\cdot$ denotes a matrix-vector product, and $\odot$ denotes the Hadamard product with broadcasting.

For integrals over a finite domain $[a,b]$ that differs from $[-1,1]$, a simple affine mapping can be applied to the roots as $\tilde{\mathbf{r}}_i = \left[(b - a) \mathbf{r} + (a + b)\right]/2$, with a similar transformation applied to the integration result:
\begin{equation}
    {\mathbf{I}} = \frac{b-a}{2}\odot \left[\mathbf{K}^\top \odot \mathbf{u}\right]^\top \cdot \mathbf{w}.
\end{equation}
This implies that the Fredholm integral operator can be approximated in $O(N^2)$ time, potentially accelerated by Single Instruction Multiple Data (SIMD) techniques.

\subsection{Volterra Integral Operator}
In Volterra integral operators, the bounds of integration are functions of the independent variable $x$. These operators are mathematically represented as:
\begin{equation}
    \mathcal{I}(u)(x) = \int_{g(x)}^{h(x)} \mathcal{K}(x,t) u(t) \, dt.
    \label{eq:volterra_operator}
\end{equation}
Unlike Fredholm integral equations, computing Volterra integrals presents additional challenges because the integration range varies with each training point. Specifically, for each $x_i$, the integral is given by:
\begin{equation}
    I(x_i) = \frac{h(\mathbf{r}_i) - g(\mathbf{r}_i)}{2} \odot \left[\mathbf{k}(x_i) \odot \mathbf{u}^{(i)}\right]^\top \cdot \mathbf{w},
\end{equation}
where $\mathbf{k}_j(x_i) = \mathcal{K}(x_i, \mathbf{r}_j)$, $\mathbf{w}$ represents the quadrature weights,  $\mathbf{r}$ is a vector containing the roots of the orthogonal polynomial, and $\mathbf{u}^{(i)}$ is evaluated to perform the quadrature over the interval $[g(x_i), h(x_i)]$:
\begin{equation*}
    \mathbf{u}^{(i)}_j = u(\frac{h(\mathbf{r}_i) - g(\mathbf{r}_i)}{2} \odot \mathbf{r}_j + \frac{h(\mathbf{r}_i) + g(\mathbf{r}_i)}{2}).
\end{equation*}
To compute the vectorized form of $\mathbf{I}_i = I(x_i)$, we need to evaluate $\mathbf{u}^{(i)}$ by calculating $u(t)$ at the following set of points, arranged in a matrix:
{\footnotesize
\begin{equation*}
    \mathbf{R} = \begin{bmatrix}
\frac{h(\mathbf{r}_1) - g(\mathbf{r}_1)}{2} \odot \mathbf{r}_1 + \frac{h(\mathbf{r}_1) + g(\mathbf{r}_1)}{2}
&
\cdots
&
\frac{h(\mathbf{r}_1) - g(\mathbf{r}_1)}{2} \odot \mathbf{r}_N + \frac{h(\mathbf{r}_1) + g(\mathbf{r}_1)}{2}
\\
&\vdots \\
&
\frac{h(\mathbf{r}_i) - g(\mathbf{r}_i)}{2} \odot \mathbf{r}_j + \frac{h(\mathbf{r}_i) + g(\mathbf{r}_i)}{2}
&
\\
&\vdots 
\\

\frac{h(\mathbf{r}_N) - g(\mathbf{r}_N)}{2} \odot \mathbf{r}_1 + \frac{h(\mathbf{r}_N) + g(\mathbf{r}_N)}{2}
&
\cdots
&
\frac{h(\mathbf{r}_N) - g(\mathbf{r}_N)}{2} \odot \mathbf{r}_N + \frac{h(\mathbf{r}_N) + g(\mathbf{r}_N)}{2}
\end{bmatrix}_{N \times N}.
\end{equation*}
}
This method requires computing the network output at $N^2$ distinct nodes. These nodes may differ from the training points as they increase quadratically with $N$, which can slow down the forward phase.

A similar method applies to evaluating the kernel function $\mathcal{K}(x,t)$, which must be computed at these points for variable $t$ and for each $x_i = \mathbf{r}_i$, i.e., $\mathbf{K}_{i,j} = \mathcal{K}(\mathbf{x}_i, \mathbf{R}_{i,j})$. As with the Fredholm integral operator, the kernel matrix can be precomputed before the training phase to accelerate the learning process. However, the network output vector $\mathbf{u}_{N^2\times 1}$ must be computed during the training phase. Combining everything, the vector $\mathbf{I}$ can be calculated as:
\begin{equation*}
    \mathbf{I}_{N\times 1} = \frac{h(\mathbf{r}) - g(\mathbf{r})}{2} \odot \left[\mathbf{K}_{N\times N} \odot \mathbf{u}_{N\times N}\right] \cdot \mathbf{w}_{N\times 1},
\end{equation*}
where $\mathbf{u}_{N\times N}$ is the reshaped form of the network output. 

\subsection{Multi-dimensional Integral Operators}
Many practical applications of integral equations involve multi-dimensional integral operators \cite{jiang2022review}. The process of approximating these operators is similar to the one-dimensional case but requires multiple iterations. Consider the following two-dimensional Fredholm integral operator as an example:
\begin{equation*}
    \mathcal{I}(u)(x, y) = \int_a^b \int_c^d \mathcal{K}(x,y,s,t) u(s,t) \, dt \, ds.
\end{equation*}
To approximate the unknown function $u(x,y)$, the network must be trained on a mesh grid composed of $N_x \times N_y$ data points. These points are typically chosen as the roots of orthogonal polynomials, denoted by $\mathbf{r}^{(x)}$ and $\mathbf{r}^{(y)}$, corresponding to the variables $x$ and $y$, respectively. The kernel matrix $\mathbf{K}$ is then computed as a four-dimensional tensor of shape $\mathbf{K} \in \mathbb{R}^{N_x \times N_y \times N_x \times N_y}$, with elements defined as $\mathbf{K}_{i,j,k,l} = \mathcal{K}(\mathbf{x}_i, \mathbf{y}_j, \mathbf{r}^{(x)}_k, \mathbf{r}^{(y)}_l)$. The Fredholm operator can then be computed as follows:
\begin{align*}
    &\tilde{\mathbf{I}}_{N_x \times N_y \times N_x} = \frac{d-c}{2} \odot \left[\mathbf{K}_{N_x \times N_y \times N_x \times N_y} \odot \mathbf{u}_{N_x \times N_y}\right]_{N_x \times N_y \times N_x \times N_y} \cdot \mathbf{w}_{N_y \times 1},\\
    &\mathbf{I}_{N_x \times N_y} = \frac{b-a}{2} \odot \tilde{\mathbf{I}}_{N_x \times N_y \times N_x} \cdot \mathbf{w}_{N_x \times 1},
\end{align*}
where $\mathbf{u}_{N_x \times N_y}$ is the reshaped form of the neural network prediction on the mesh grid of $\mathbf{x}$ and $\mathbf{y}$. For three dimensions, a similar approach can be applied to compute the six-dimensional tensor $\mathbf{K}$, followed by the computation of $\mathbf{I}_{N_x \times N_y \times N_z}$:
\begin{align*}
    &\hat{\mathbf{I}}_{N_x \times N_y \times N_z \times N_x \times N_y} = \frac{f-e}{2} \odot \left[\mathbf{K}_{N_x \times N_y \times N_z \times N_x \times N_y \times N_z} \odot \mathbf{u}_{N_x \times N_y \times N_z}\right] \cdot \mathbf{w}_{N_z \times 1},\\
    &\tilde{\mathbf{I}}_{N_x \times N_y \times N_z \times N_x} = \frac{d-c}{2} \odot \hat{\mathbf{I}}_{N_x \times N_y \times N_z \times N_x \times N_y} \cdot \mathbf{w}_{N_y \times 1},\\
    &{\mathbf{I}}_{N_x \times N_y \times N_z} = \frac{b-a}{2} \odot \tilde{\mathbf{I}}_{N_x \times N_y \times N_z \times N_x} \cdot \mathbf{w}_{N_x \times 1}.
\end{align*}

The formulation of the Volterra integral operator needs some adjustments. Consider a two-dimensional Volterra integral operator of the form:
\begin{equation*}
    \mathcal{I}(u)(x, y) = \int_{g_1(x)}^{h_1(x)} \int_{g_2(y)}^{h_2(y)} \mathcal{K}(x, y, s, t) \, u(s, t) \, dt \, ds.
\end{equation*}
Similar to the multi-dimensional Fredholm operator, the network should be trained on a grid of $N_x \times N_y$ nodes, where these nodes are selected as the roots of orthogonal polynomials. To compute the kernel function in this context, we first need to compute two matrices, $\mathbf{R}^{(x)} \in \mathbb{R}^{N_x \times N_x}$ and $\mathbf{R}^{(y)} \in \mathbb{R}^{N_y \times N_y}$, as follows:
\begin{equation*}
    \begin{aligned}
        \mathbf{R}^{(x)}_{i,j} &= \frac{h_1(\mathbf{r}_i^{(x)}) - g_1(\mathbf{r}_i^{(x)})}{2} \odot \mathbf{r}_j^{(x)} + \frac{h_1(\mathbf{r}_i^{(x)}) + g_1(\mathbf{r}_i^{(x)})}{2}, \\
        \mathbf{R}^{(y)}_{i,j} &= \frac{h_2(\mathbf{r}_i^{(y)}) - g_2(\mathbf{r}_i^{(y)})}{2} \odot \mathbf{r}_j^{(y)} + \frac{h_2(\mathbf{r}_i^{(y)}) + g_2(\mathbf{r}_i^{(y)})}{2}.
    \end{aligned}
\end{equation*}
These matrices are utilized in two steps: first for computing the kernel function, and second for computing the tensor $\mathbf{u}$ within the integral operator:
\begin{equation*}
\begin{aligned}
    \mathbf{K}_{N_x \times N_y \times N_x \times N_y} &= \mathcal{K}(\mathbf{r}^{(x)}, \mathbf{r}^{(y)}, \mathbf{R}^{(x)}, \mathbf{R}^{(y)}),\\
    \mathbf{u}_{N_x \times N_y \times N_x \times N_y} &= u(\hat{\mathbf{R}}),
\end{aligned}
\end{equation*}
where $\hat{\mathbf{R}} \in \mathbb{R}^{N_x^2\times N_y^2 \times 2}$ is a reshaped mesh grid formed from $\mathbf{R}^{(x)}$ and $\mathbf{R}^{(y)}$. Using these tensors, the integral operator can be approximated as:
\begin{equation*}
    \mathbf{I}_{N_x \times N_y} =
    \frac{h_1(\mathbf{r}^{(x)}) - g_1(\mathbf{r}^{(x)})}{2} \odot
    \left[\frac{h_2(\mathbf{r}^{(y)}) - g_2(\mathbf{r}^{(y)})}{2} \odot
    \left[\mathbf{K} \odot \mathbf{u}\right] \cdot \mathbf{w}\right] \cdot \mathbf{w}.
\end{equation*}
In this case, the time complexity of the computation is \( O(N_x^2 \times N_y^2) \), which is manageable for a two-dimensional operator. By applying a similar procedure, one can compute the approximation for a three-dimensional Volterra operator. 

\subsection{Fractional Operators}
The concept of fractional derivatives can be traced back to the 17th century, with mathematicians such as Leibniz and Euler being among the first to introduce the idea \cite{loverro2004fractional}. In recent years, this field has gained considerable attention due to its wide range of applications across various domains, including physics, engineering, finance, and control theory \cite{sun2018new, dalir2010applications, arora2022applications, yang2022fractional}.

A key feature of fractional calculus is the diversity of its definitions, which can vary depending on the specific application or problem. The Riemann–Liouville, Hadamard, and Atangana–Baleanu integrals are among the most well-known fractional integrals, each providing a different approach to integration. Similarly, fractional derivatives have multiple definitions, such as those proposed by Riemann–Liouville, Caputo, Caputo–Fabrizio, Atangana–Baleanu, and Grünwald–Letnikov \cite{valerio2013fractional}.

Typically, fractional derivatives and their definitions are expressed in terms of integrals. A notable example is the Caputo fractional derivative of a function \( u(x) \) of order \( \alpha \in \mathbb{R}^+ \), which is defined as:
\begin{equation}
    _0^{C}\mathfrak{D}^{\alpha}_{t} u(x) = \frac{d^\alpha u(x)}{dx^\alpha} = \frac{1}{\Gamma(v-\alpha)} \int_{0}^{x} (x-s)^{v-\alpha-1} \frac{d^v u(s)}{ds^v} \, ds,
\end{equation}
where \( v-1 < \alpha < v \) for \( v \in \mathbb{Z}^+ \), and \( \Gamma(z) = \int_{0}^{\infty} t^{z-1} e^{-t} \, dt \) denotes the Gamma function. It has been shown that as \( \alpha \) approaches an integer, this definition converges to the \( v \)-th classical derivative \cite{valerio2013fractional, podlubny1998fractional}. This definition is characterized by useful properties such as linearity and additivity. One valuable property is \cite{taheri2024accelerating}:
\begin{equation}
    _0^{C}\mathfrak{D}^{p}_{t} u(x) = _0^{C}\mathfrak{D}^{v}_{t} [_0^{C}\mathfrak{D}^{\alpha}_{t} u(x)] = \frac{d^v}{dx^v} [_0^{C}\mathfrak{D}^{\alpha}_{t} u(x)] = _0^{C}\mathfrak{D}^{\alpha}_{t} \frac{d^v}{dx^v} u(x), 
    \label{eq:property}
\end{equation}
where \( p = v + \alpha \), \( v \) is the integer part of \( p \), and \( 0 < \alpha < 1 \). This property ensures that a fractional derivative of order \( p \) can be obtained by computing the \( v \)-th derivative of \( _0^{C}\mathfrak{D}^{\alpha} u(x) \).

Calculating this definition can be challenging because of the integral term, which involves a singularity. Although one might consider using techniques similar to those applied for approximating the Volterra integral operator, the singularity requires evaluating the approximation at numerous points within the integration domain. This process can become computationally expensive, particularly in the context of deep neural networks. To address this, the following theorem presents a mathematical discretization technique that strikes a balance between accuracy and computational efficiency, simplifying the problem to matrix-vector multiplication \cite{taheri2024accelerating}.

\begin{theorem}
    Let $0 < \alpha < 1$ and the interval $[0, x]$ is discretized to $n+1$ points, $0 = x_0 < x_1 < \dots < x_n=x$. Then the following linear combination approximates the Caputo fractional derivative of order $\alpha$:
\begin{equation}
        {}_{0}^{C}\mathfrak{D}_{x_n}^{\alpha}u{(x)} = \mathbf{u}^\top \boldsymbol{\nu} = \sum_{k=0}^n\nu_k u(x_k),
    \end{equation}
where $u(\cdot)$ is the desired function and $\nu_k$ are real-valued weights.
    \label{thm:L1}
\end{theorem}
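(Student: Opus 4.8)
The plan is to use that $0<\alpha<1$ forces $v=1$ in the Caputo definition, so that
\begin{equation*}
    {}_{0}^{C}\mathfrak{D}_{x_n}^{\alpha}u(x) = \frac{1}{\Gamma(1-\alpha)} \int_{0}^{x_n} (x_n - s)^{-\alpha}\, u'(s)\, ds,
\end{equation*}
and then to replace $u'$ on each subinterval $[x_{k-1},x_k]$ by the slope of the linear interpolant through $(x_{k-1},u(x_{k-1}))$ and $(x_k,u(x_k))$. First I would split the integral as $\sum_{k=1}^{n}\int_{x_{k-1}}^{x_k}$ and substitute the constant value $u'(s)\approx \bigl(u(x_k)-u(x_{k-1})\bigr)/(x_k-x_{k-1})$ on the $k$-th piece. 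Because $1-\alpha>0$, the factor $(x_n-s)^{-\alpha}$ is integrable on every $[x_{k-1},x_k]$ (including the last, where the singularity sits exactly at the endpoint), so I can evaluate in closed form
\begin{equation*}
    \int_{x_{k-1}}^{x_k}(x_n-s)^{-\alpha}\,ds = \frac{1}{1-\alpha}\Bigl[(x_n-x_{k-1})^{1-\alpha}-(x_n-x_k)^{1-\alpha}\Bigr],
\end{equation*}
which yields coefficients $a_k = \bigl[(x_n-x_{k-1})^{1-\alpha}-(x_n-x_k)^{1-\alpha}\bigr]\big/\bigl[\Gamma(2-\alpha)(x_k-x_{k-1})\bigr]$ and the intermediate representation ${}_{0}^{C}\mathfrak{D}_{x_n}^{\alpha}u(x)\approx \sum_{k=1}^{n} a_k\bigl(u(x_k)-u(x_{k-1})\bigr)$.

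The second step is purely algebraic: I would apply Abel summation (summation by parts) to the sum $\sum_{k=1}^{n} a_k\bigl(u(x_k)-u(x_{k-1})\bigr)$ to collect the coefficient multiplying each nodal value $u(x_k)$. Reindexing the shifted sum gives $\nu_n = a_n$, $\nu_0 = -a_1$, and $\nu_k = a_k - a_{k+1}$ for $1\le k\le n-1$, which is exactly the claimed form $\mathbf{u}^\top\boldsymbol{\nu} = \sum_{k=0}^{n}\nu_k u(x_k)$ with real-valued weights $\nu_k$ that depend only on $\alpha$ and on the chosen grid, and are therefore independent of the network training phase and precomputable. I would also remark that on the uniform mesh $x_k = k x/n$ these collapse to the familiar L1 weights assembled from $b_j = (j+1)^{1-\alpha}-j^{1-\alpha}$.

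If one wants a quantitative statement beyond ``approximates,'' the remaining task is a truncation estimate. Assuming $u\in C^2([0,x_n])$, a Taylor/mean-value argument shows that replacing $u'(s)$ by the difference quotient on the $k$-th piece costs an $O(h_k)$ pointwise error with $h_k = x_k - x_{k-1}$; integrating this against the weight $(x_n-s)^{-\alpha}$ and summing over $k$ gives a bound of the form $\sum_k h_k\int_{x_{k-1}}^{x_k}(x_n-s)^{-\alpha}\,ds$. I expect the main obstacle to be controlling the contribution of the final subinterval $[x_{n-1},x_n]$, where the kernel blows up: one must show the weak singularity is mild enough that the accumulated error remains $O(h^{2-\alpha})$ on a uniform mesh (or an appropriately graded-mesh analogue in the non-uniform case). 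Everything else — the closed-form integration of the singular kernel and the Abel reindexing — is routine bookkeeping.
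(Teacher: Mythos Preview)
Your proposal is correct and follows essentially the same route as the paper: split the Caputo integral over the subintervals, replace $u'$ on each piece by the secant slope, integrate the weakly singular kernel in closed form using $\Gamma(2-\alpha)=(1-\alpha)\Gamma(1-\alpha)$, and then reindex (what you call Abel summation) to collect the nodal coefficients $\nu_k$. The only differences are cosmetic---your index runs $k=1,\dots,n$ over $[x_{k-1},x_k]$ whereas the paper uses $k=0,\dots,n-1$ over $[x_k,x_{k+1}]$---and your added remarks on the uniform-mesh L1 weights and the $O(h^{2-\alpha})$ truncation estimate go beyond what the paper proves.
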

\begin{proof}
We begin by dividing the integration into $n$ non-equidistant intervals. 
  \begin{equation*}
    \begin{aligned}
{}_{0}^{C}\mathfrak{D}_{x_n}^{\alpha}u{(x)} &=\frac{1}{\Gamma{(1-\alpha)}}\int_{0}^{x_{n}}\frac{u^{\prime}(x)}{(x_{n}-x)^{\alpha}}dx  \\
&=\frac1{\Gamma(1-\alpha)}\sum_{k=0}^{n-1}\int_{x_k}^{x_{k+1}}\frac1{(x_n-x)^\alpha}u^{\prime}(x)dx.
\end{aligned}
\end{equation*}
Within each interval, the derivative $u'(x)$ can be approximated using the forward finite difference method:
\begin{equation*}
\begin{aligned}
{}_{0}^{C}\mathfrak{D}_{x_n}^{\alpha}u{(x)} & \approx\frac1{\Gamma(1-\alpha)}\sum_{k=0}^{n-1}\int_{x_k}^{x_{k+1}}\frac{1}{(x_n-x)^k}\frac{u(x_{k+1})-u(x_k)}{x_{k+1}-x_k}dx\\
& \approx\frac1{\Gamma(1-\alpha)}\sum_{k=0}^{n-1}\frac{u(x_{k+1})-u(x_k)}{x_{k+1}-x_k}\int_{x_k}^{x_{k+1}}\frac{dx}{(x_n-x)^\alpha} .\\
\end{aligned}
\end{equation*}
Subsequently, analytical integration is applied to simplify the expression:
   \begin{equation*}
    \begin{aligned}
{}_{0}^{C}\mathfrak{D}_{x_n}^{\alpha}u{(x)} & \approx \frac1{\Gamma(1-\alpha)}\sum_{k=0}^{n-1}\frac{u(x_{k+1})-u(x_{k+1})}{x_{k+1}-x_{k}}\times\left[-\frac{(x_{n}-x_{k})^{1-\alpha}-(x_{n}-x_{k+1})^{1-\alpha}}{(\alpha-1)}\right] \\
&\approx\frac1{\Gamma(2-\alpha)}\sum_{k=0}^{n-1}\left[\frac{(x_n-x_k)^{1-\alpha}-(x_{n}-x_{k+1})^{1-\alpha}}{x_{k+1}-x_k}\right]\left[u(x_{k})-u(x_{k+1})\right].
    \end{aligned}
\end{equation*}
Introducing $\mu_k$ as the weight component in the summation, a straightforward reformulation leads to:
\begin{equation*}
    \begin{aligned}
        {}_{0}^{C}\mathfrak{D}_{x_n}^{\alpha}u{(x)} &\approx\frac{1}{\Gamma{(2-\alpha)}}\sum_{k=0}^{n-1}\mu_{k}\left[u{(x_k)}-u{(x_{k+1})}\right] \\
&\approx\frac{1}{\Gamma(2-\alpha)}\sum_{k=0}^{n-1} \left[ \mu_{k}-\mu_{k-1}\right] u(x_{k}) \\
&\approx\sum_{k=0}^n\nu_k u(x_k), 
    \end{aligned}
\end{equation*}
where $\nu_k = {\left[\mu_k - \mu_{k-1}\right]}/{\Gamma(2-\alpha)}$ and:
\begin{equation*}
    \mu_k = \begin{cases}\displaystyle\frac{(x_n-x_k)^{1-\alpha}-(x_{n}-x_{k+1})^{1-\alpha}}{x_{k+1}-x_k} & 0\le k < n\\ 0 & \text{otherwise}.\end{cases}
\end{equation*}
\end{proof}

\begin{corollary}
To compute the Caputo fractional derivative for \(\alpha > 1\), one can first determine the integer-order derivative and then apply the fractional derivative of order \(\alpha\), where \(\alpha\) represents the fractional part of the total derivative order (see equation \ref{eq:property}). The integer-order derivative can be efficiently computed using automatic differentiation, ensuring that the accuracy is not compromised.
\end{corollary}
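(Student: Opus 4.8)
The plan is to reduce the case $p > 1$ to the already-proved case $0 < \alpha < 1$ of Theorem~\ref{thm:L1} by invoking the additivity/composition property \eqref{eq:property}. First I would fix a non-integer order $p > 1$ and split it as $p = v + \alpha$, where $v = \lfloor p\rfloor \in \mathbb{Z}^+$ is the integer part and $\alpha = p - v \in (0,1)$ is the fractional part. Applying \eqref{eq:property} then gives
\[
{}_{0}^{C}\mathfrak{D}_{x}^{p} u(x) = {}_{0}^{C}\mathfrak{D}_{x}^{\alpha}\!\left[\frac{d^{v}}{dx^{v}} u(x)\right],
\]
so that the order-$p$ Caputo derivative is nothing but the order-$\alpha$ Caputo derivative of the classical $v$-th derivative of $u$.

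Next I would treat the two operators on the right separately. Because the surrogate $u = \mathrm{MLP}(\mathbf{X})$ is built from smooth activations, its $v$-th derivative $\psi(x) := d^{v}u/dx^{v}(x)$ is produced exactly (to machine precision) by $v$ successive reverse-mode automatic-differentiation passes, so no discretization error enters at this stage --- this is the sense in which ``accuracy is not compromised''. Then, regarding $\psi$ as a new integrand and using the grid $0 = x_0 < x_1 < \dots < x_n = x$, Theorem~\ref{thm:L1} applies verbatim (since $0<\alpha<1$) and yields ${}_{0}^{C}\mathfrak{D}_{x_n}^{\alpha}\psi(x) = \sum_{k=0}^{n}\nu_k\,\psi(x_k)$ with the same weights $\nu_k = (\mu_k - \mu_{k-1})/\Gamma(2-\alpha)$. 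Composing the two steps gives a matrix--vector product: a fixed coefficient vector $\boldsymbol{\nu}$ contracted against the vector of automatic-differentiation outputs $\big(d^{v}u/dx^{v}(x_k)\big)_{k=0}^{n}$. I would close by handling the degenerate case in which $p$ is itself a positive integer, where $\alpha = 0$, the fractional operator collapses to the identity, and the claim reduces to computing $d^{p}u/dx^{p}$ by automatic differentiation alone --- consistent with the limit $\alpha \to v$ noted after the definition of the Caputo derivative.

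The hard part will not be the algebra, which is essentially bookkeeping, but justifying that the hypotheses behind \eqref{eq:property} are met in this setting: one needs the MLP surrogate to be sufficiently regular (true for $C^{\infty}$ activations such as $\tanh$, but worth flagging for non-smooth choices like ReLU, where $d^{v}u/dx^{v}$ may fail to exist) and one needs the interchange of ordinary differentiation and fractional integration to be valid pointwise on $[0,x]$ rather than merely almost everywhere. Once that is in place, the error of the combined scheme equals exactly the error incurred by the L1-type approximation of Theorem~\ref{thm:L1} applied to $\psi$, so no new convergence analysis is required.
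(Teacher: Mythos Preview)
Your proposal is correct and follows exactly the route the paper intends: invoke property~\eqref{eq:property} to write ${}_{0}^{C}\mathfrak{D}^{p} u = {}_{0}^{C}\mathfrak{D}^{\alpha}\bigl(d^{v}u/dx^{v}\bigr)$, compute the integer-order part by automatic differentiation, and feed the result into Theorem~\ref{thm:L1}. In fact the paper does not supply a formal proof for this corollary at all --- it simply points to equation~\eqref{eq:property} and states the conclusion --- so your write-up is already more detailed than what appears there; the additional remarks on activation smoothness and the integer-$p$ edge case are sound but go beyond what the authors deemed necessary.
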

\begin{theorem}
\label{thm:matrix}
 The operational matrix of the Caputo fractional derivative can be obtained using the lower triangular matrix $\mathcal{M}$:
\begin{equation*}
    \mathcal{M} = \begin{bmatrix}
0 \\
\nu_0^{(1)} & \nu_1^{(1)}\\
\nu_0^{(2)} & \nu_1^{(2)} & \nu_2^{(2)}\\
&\vdots\\
\nu_0^{(N-2)} & \nu_1^{(N-2)} & \nu_2^{(N-2)} & \nu_3^{(N-2)} & \cdots & \nu_{N-2}^{(N-2)}\\
\nu_0^{(N-1)} & \nu_1^{(N-1)} & \nu_2^{(N-1)} & \nu_3^{(N-1)} & \cdots & \nu_{N-2}^{(N-1)} & \nu_{N-1}^{(N-1)}
\end{bmatrix}.
\end{equation*}
Therefore, for the vector-valued function $\mathbf{u}_i = u(x_i)$ consisting of $N$ elements and arbitrary nodes $x_i$, the Caputo fractional derivative of order $\alpha$ can be efficiently computed using the operational matrix of the derivative:
\begin{equation*}
    _{}^{C}\mathbf{u}^{(\alpha)} = \frac{d^\alpha \mathbf{u}}{dx^\alpha} \approx \mathcal{M} \mathbf{u}.
\end{equation*}
\end{theorem}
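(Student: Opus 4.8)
The plan is to build $\mathcal{M}$ row by row directly from Theorem~\ref{thm:L1}. For each index $m \in \{0,1,\ldots,N-1\}$, I would regard the sub-mesh $0 = x_0 < x_1 < \cdots < x_m$ as a discretization of $[0,x_m]$ and invoke Theorem~\ref{thm:L1} with $n := m$. This produces weights $\nu_0^{(m)}, \ldots, \nu_m^{(m)}$, defined through $\nu_k^{(m)} = (\mu_k^{(m)} - \mu_{k-1}^{(m)})/\Gamma(2-\alpha)$ with $\mu_k^{(m)}$ the interval-dependent coefficient from the proof of Theorem~\ref{thm:L1} (now with $x_n$ replaced by $x_m$), such that
\[
    {}_{0}^{C}\mathfrak{D}_{x_m}^{\alpha} u(x) \approx \sum_{k=0}^{m} \nu_k^{(m)} u(x_k).
\]
First I would record that this sum involves only the nodal values $u(x_0), \ldots, u(x_m)$, a direct consequence of the causal, Volterra-type structure of the Caputo integral whose kernel $(x_m - s)^{-\alpha}$ is supported on $s \le x_m$. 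Hence extending each weight vector by zeros, $\nu_k^{(m)} := 0$ for $k > m$, is consistent and yields exactly the lower-triangular pattern displayed in the statement.

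Next I would assemble $\mathcal{M} \in \mathbb{R}^{N \times N}$ by placing $(\nu_0^{(m)}, \ldots, \nu_m^{(m)}, 0, \ldots, 0)$ in its $m$-th row for $m = 0, \ldots, N-1$. The $m = 0$ row is identically zero because the Caputo derivative over the degenerate interval $[0, x_0] = [0,0]$ vanishes (the defining integral is empty); equivalently the sum $\sum_{k=0}^{-1}$ is empty, forcing $\nu_0^{(0)} = 0$. Reading off the matrix-vector product, the $m$-th component of $\mathcal{M}\mathbf{u}$ equals $\sum_{k=0}^{N-1} \mathcal{M}_{m,k}\, u(x_k) = \sum_{k=0}^{m} \nu_k^{(m)} u(x_k)$, which by the previous display approximates ${}_{0}^{C}\mathfrak{D}_{x_m}^{\alpha} u(x)$. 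Stacking over $m$ gives ${}^{C}\mathbf{u}^{(\alpha)} \approx \mathcal{M}\mathbf{u}$, as claimed.

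The re-indexing and the support observation are routine; the only point demanding care is consistency of the weight definitions across rows, specifically that the telescoping boundary contribution at $k = m-1$ together with the vanishing term at $k = m$ combine so that $\nu_m^{(m)}$ is still given by the same formula, and that no node outside $[0,x_m]$ leaks into row $m$. I expect this bookkeeping, rather than any genuine analytic difficulty, to be the main obstacle, since Theorem~\ref{thm:L1} already supplies the per-row approximation. If a uniform error estimate is desired, it would follow by applying the forward-difference truncation bound underlying Theorem~\ref{thm:L1} to each row and taking the maximum over $m$.
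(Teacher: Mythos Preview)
Your proposal is correct and follows essentially the same approach as the paper: apply Theorem~\ref{thm:L1} at each node $x_m$ (with $n:=m$) to obtain the row weights $\nu_k^{(m)}$, then stack these rows into the lower-triangular matrix $\mathcal{M}$. The paper's proof is terser and omits the bookkeeping remarks you add about the degenerate first row and the zero-padding, but the argument is the same.
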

\begin{proof}
The proof can be finalized by approximating the $i^\text{th}$ element of $_{}^{C}\mathbf{u}^{(\alpha)}$ using Theorem \ref{thm:L1}:
    \begin{equation*}
        _{}^{C}\mathbf{u}^{(\alpha)}_i = \sum_{k=0}^i\nu_k^{(i)} \mathbf{u}_{i,k},
    \end{equation*}
where $\nu_k^{(i)} = {\left[\mu_k^{(i)} - \mu_{k-1}^{(i)}\right]}/{\Gamma(2-\alpha)}$ and:
\begin{equation*}
    \mu_k^{(i)} = \begin{cases}\displaystyle\frac{(x_i-x_k)^{1-\alpha}-(x_i-x_{k+1})^{1-\alpha}}{x_{k+1}-x_k} & 0\le k < N\\ 0 & \text{otherwise}.\end{cases}
\end{equation*}
\end{proof}

\section{Numerical Results}
In this section, we validate the proposed method by applying it to the numerical solution of various mathematical problems involving integral operators. We begin with a sensitivity analysis on the quadrature method to explore the hyperparameter space of the problem. Following this, we test different types of integral equations, including one-dimensional and multi-dimensional systems, as well as integro-differential equations involving ordinary, partial, and fractional derivatives. Next, we address optimal control problems characterized by fractional derivatives, delay terms, integro-differential constraints, and multi-dimensional cases. We also tackle inverse problems that include integral terms. To demonstrate the accuracy of our approach, we consider various integral operators such as Fredholm, Volterra, and Volterra-Fredholm, exploring different configurations, including types, linearities, singularities, and analytical dynamics. Most examples are drawn from Wazwaz's authoritative work on linear and nonlinear integral equations \cite{wazwaz2011linear}, unless otherwise cited. In either case, it is clear that the source terms of the equations align with the provided exact solutions.

For the majority of experiments, we selected benchmark integral equations with known analytical solutions, which allow for precise comparisons. We report the Mean Absolute Error (MAE) between the exact solution and the solution obtained using our Physics-Informed Neural Network framework:
\begin{equation}
    \mathrm{MAE}(u, \hat{u}) = \sum_{i=1}^N |u(x_i) - \hat{u}(x_i)|,
\end{equation}
where \(N\) represents the number of test points, \(u(\cdot)\) denotes the exact solution, and \(\hat{u}(\cdot)\) is the solution predicted by the network.

The proposed method was implemented in Python 3.12 using the PyTorch framework (Version 2.2.2), which supports automatic differentiation on the computational graph. All experiments were conducted on a personal computer equipped with an Intel Core i5-1235U CPU and 24GB of RAM, running the EndeavourOS Linux distribution.

\subsection{Sensitivity Analysis and Hyperparameters}
Before addressing the numerical solution of integral equations, we first compare the integration techniques mentioned and demonstrate how the choice of algorithm can impact solution accuracy. To achieve this, we chose four distinct functions over specified domains, divided the domain $[a, b]$ into $N$ nodes, and utilized various numerical integration methods: Gaussian quadrature, Monte Carlo integration, Newton-Cotes with $N$ nodes, and the basic Newton-Cotes method using just two nodes and weights, commonly known as the trapezoid rule. This method for a set of $N$ function evaluations can be expressed as:
\begin{equation*}
    \mathrm{trapz}(\mathbf{u}) = \frac{\delta}{2} \sum_{i=1}^{N-1} \mathbf{u}_i + \mathbf{u}_{i-1},
\end{equation*}
where $\delta$ represents the distance between two nodes $x_i$ and $x_{i+1}$, and $\mathbf{u}_i = u(x_i)$. The comparison is illustrated in Figure \ref{fig:compare-quad}. In this figure, we used a 32-bit hardware floating point with 7-decimal-point precision for storing weights and function values. This limitation was chosen because the PyTorch framework, which will be utilized to define neural networks and implement automatic differentiation, operates under this hardware specification.

As shown in the figure, the Monte Carlo method exhibits the lowest accuracy among the methods tested, as anticipated, due to its reliance on a large number of function evaluations. Following this, the Trapezoid method shows accuracy similar to Monte Carlo, reflecting its simplicity as a basic quadrature technique. The standard Newton-Cotes method with $N$ nodes achieves better accuracy than the Trapezoid method, as expected. However, this method also displays increasing numerical instability with more nodes, likely due to Runge's phenomenon. In contrast, the Gauss-Legendre quadrature method consistently demonstrates the highest stability and accuracy, even with a relatively small number of function evaluations. In some cases, it reaches the best achievable accuracy within the limits of 32-bit floating point precision, making it an excellent choice for subsequent simulations of integral equations.
\begin{figure}[ht]
    \centering
    \includegraphics[width=0.95\textwidth]{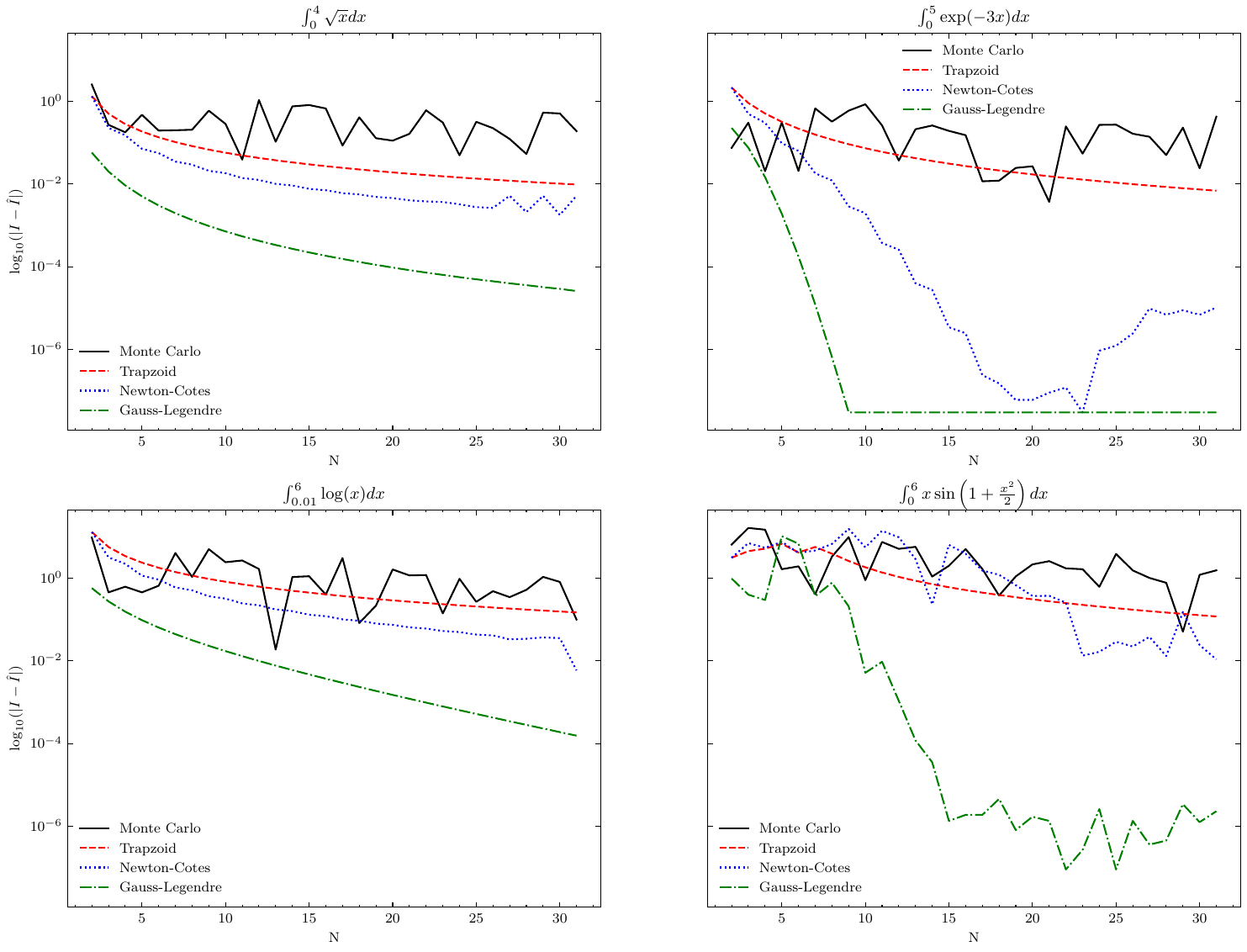}
    \caption{A comparison between the Monte Carlo, Newton-Cotes, and Gaussian Quadrature methods for approximating the integral of various functions reveals that Gaussian Quadrature provides greater accuracy with reduced numerical instability. The CPU times for the Monte Carlo, Trapezoid method, Newton-Cotes, and Gauss-Legendre methods with $N=30$ nodal points are $9.5 \pm 0.37$, $19 \pm 0.1$, $8.13 \pm 0.09$, and $8.3 \pm 0.03$ microseconds, respectively. In contrast, computing the derivative of the same function using automatic differentiation takes $21 \pm 0.89$ microseconds. It is evident that the Gauss-Legendre approach is both accurate and efficient.}
    \label{fig:compare-quad}
\end{figure}

In the next experiment, we conduct a hyperparameter analysis on four different integral equations with distinct exact solutions. We consider two Fredholm and two Volterra integral equations with identical structures, except for the exact solutions and the corresponding source terms. The equations are as follows:
\begin{align}
    u(x) &= \exp(x) + x - \frac{4}{3} + \int_0^1 t \, u(t) \, dt \tag{EX.1}, \\
    u(x) &= \sin(2x) + \frac{\pi}{2} + \int_0^{\frac{\pi}{2}} t \, u(t) \, dt \tag{EX.2}, \\
    u(x) &= x + 2 \exp(x) - 1 - \frac{x^3}{3} - x \exp(x) + \int_0^x t \, u(t) \, dt \tag{EX.3}, \\
    u(x) &= \frac{3 \sin(2x)}{4} + \frac{x \cos(2x)}{2} + \int_0^x t \, u(t) \, dt \tag{EX.4},
\end{align}
where for \textbf{(EX.1)} and \textbf{(EX.3)} the exact solution is $u(x) = \exp(x)$ over $\Delta = [0, 1]$, and for \textbf{(EX.2)} and \textbf{(EX.4)} the dynamics follow $u(x) = \sin(2x)$ over $\Delta = [0, \pi]$. We compared the CPU time and MAE of these four examples across different sets of hyperparameter configurations, including varying numbers of training points, hidden layers, and neurons in hidden layers, to assess how these parameters impact the accuracy of the learned solution. Figure \ref{fig:hp-space} presents these experiments.

In the first experiment, we selected a neural network with two hidden layers and an architecture of $[1, 10, 10, 1]$, where $10$ represents the number of neurons in each layer, and we examined different numbers of training points (or Gaussian nodes). As shown, increasing the number of training points up to 10 is sufficient for accurate network prediction; beyond this, the accuracy remains approximately constant. The accuracy for examples \textbf{(EX.2)} and \textbf{(EX.4)} is lower than for the other two, due to their stiff solutions and wider problem domains. In both cases, the Fredholm integral equations are more accurately solved compared to the Volterra integral equations. Additionally, the time complexity for solving Fredholm equations is significantly lower than for Volterra equations. This is directly related to the need to predict the function at a set of different nodes, which grows quadratically with the number of training data points when computing the integral term in Volterra equations.

The next experiment involved setting the number of training points to 50 and evaluating the network's accuracy as its depth increased. We fixed the number of hidden neurons per layer to 10 and tested the proposed method on networks with hidden layers ranging from 2 to 10. We observed that as the network depth increased, its accuracy decreased, likely due to the vanishing gradient problem. The CPU time for training also increased for Volterra-type problems, while it had a lower impact on Fredholm equations.

In the following experiment, we fixed the number of layers to two, meaning an architecture of $[1, H, H, 1]$, and evaluated different values for $H$, the number of hidden neurons. We found that fewer than 5 hidden neurons significantly reduced network accuracy, while the number of neurons had little impact on CPU time due to the vectorized formulation.

Based on these experiments, we will use the architecture $[d, 10, 10, 1]$ for all subsequent problems, where $d$ is the dimensionality of the problem. We will employ a hyperbolic tangent as the activation function, the L-BFGS algorithm for optimizing the network weights, with a learning rate ranging from $0.01$ to $0.1$ and a number of iterations from 50 to 250.

\begin{figure}[ht]
    \centering
    \includegraphics[width=0.95\textwidth]{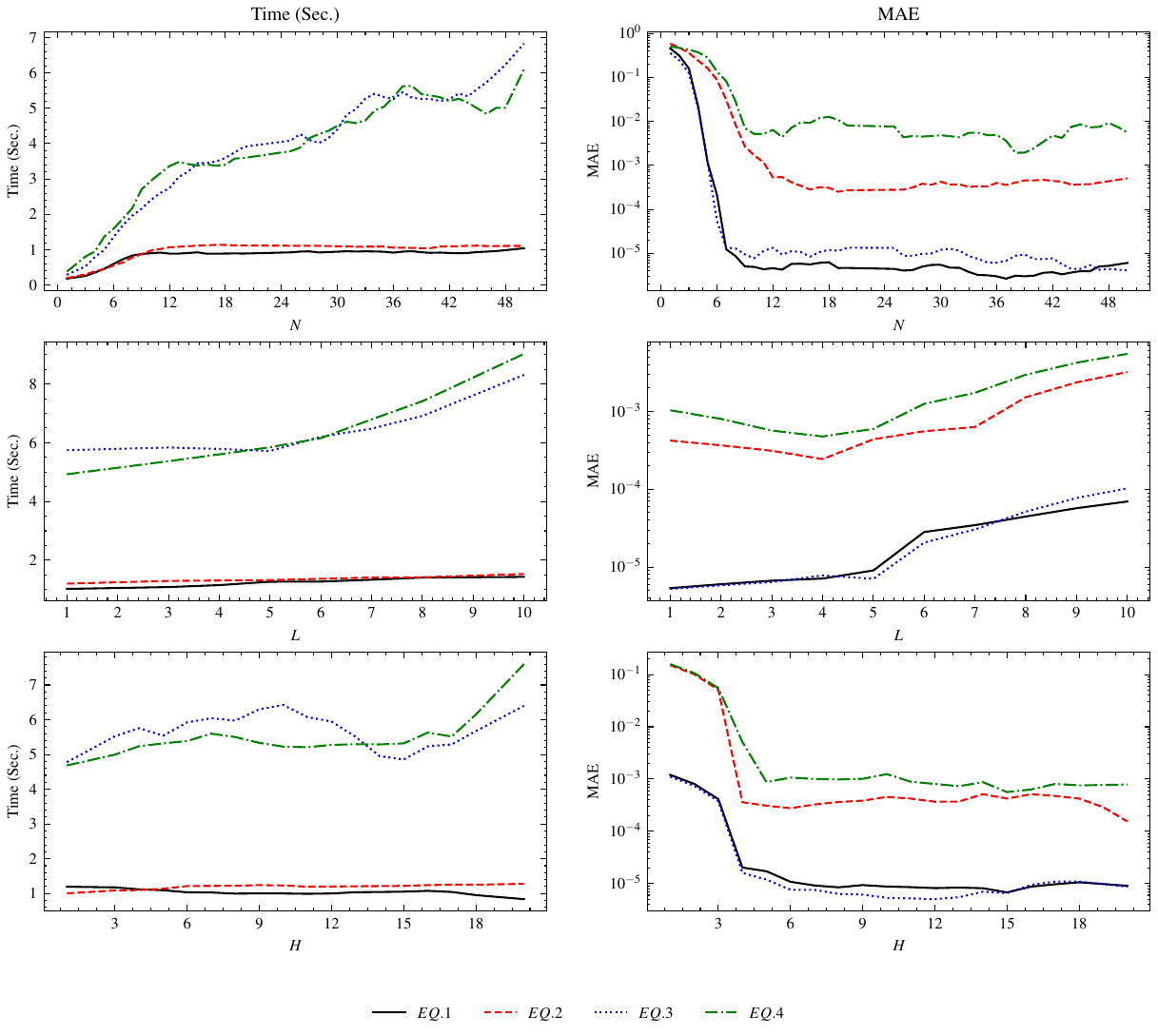}
    \caption{A comparison of four different Volterra and Fredholm integral equations: two with exponential solutions and two with stiff dynamics. The first row illustrates the impact of the number of training points on network accuracy. The middle row examines the effect of varying the number of layers, while the bottom row explores different numbers of hidden neurons in a two-layer neural network. The left column compares CPU time, while the right column evaluates MAE on test data. The reported times represent the mean of five separate runs of the algorithm, using a learning rate of $0.1$ and $50$ epochs. In all cases, we use the hyperbolic tangent function as the nonlinearity mapping.}
    \label{fig:hp-space}
\end{figure}

\subsection{One-dimensional Integral Equations}
In this section, we consider one-dimensional integral equations of the form:
\begin{equation*}
    \kappa u(x) = \mathcal{S}(x) + \int_\Delta \mathcal{K}(x,t) \zeta(u(t)) \, dt,
\end{equation*}
where $\mathcal{S}(x)$ represents the source term, $\mathcal{K}(x,t)$ is the kernel of the integral operator, and $\zeta(x)$ is a function indicating the linearity of the problem. The parameter $\kappa \in \mathbb{R}$, a known constant, determines the type of problem: if $\kappa = 0$, the problem is classified as a first-kind IE; otherwise, it is a second-kind IE. The problem domain, denoted as $\Delta = [a,b]$, will be applied to all types of equations. Specifically, we assume the domain for the Fredholm operator to be $a,b = 0,1$, and for the Volterra operator, we set $g(x), h(x) = 0, x$ for $x \in \Delta$. For the Volterra-Fredholm problem, the following equation is considered:
\begin{equation*}
    \kappa u(x) = \mathcal{S}(x) + \int_a^b \mathcal{K}_f(x,t) \zeta(u(t)) \, dt + \int_0^x \mathcal{K}_v(x,t) \zeta(u(t)) \, dt.
\end{equation*}

Using the method proposed in Section \ref{sec:3}, we simulate a variety of integral equations and present the problem specifications along with the simulation MAE results in Table \ref{tbl:1d}. It is observed that for most problems, except for the Abel-type singular integral equations, the results are highly accurate. For Abel problems, increasing the number of nodal points or employing a different Gaussian quadrature method, other than Gauss-Legendre, may improve accuracy.

\begin{table}[ht]
\begin{tabular}{@{}llllllll@{}}
\toprule
Type & $\Delta$& $\zeta(x)$ & $\kappa$ & Exact & Source term & Kernel & MAE \\ \midrule
Fredholm & $[0,1]$ & $x$ & $1$ & $x+e^x$ & $e^x + x - \frac{4}{3}$ & $t$ & $4.45 \times 10^{-5}$ \\
Fredholm & $[0,1]$ & $x$ & $1$ & $x+e^x$ & $e^x + \frac{x}{2} - \frac{4}{3} + x e$ & $t-x$ & $1.56 \times 10^{-5}$ \\
Fredholm & $[0,1]$ & $e^x$ & $1$ & $x$ & $x e$ & $-x$ & $1.48 \times 10^{-6}$ \\
Volterra & $[0,1]$ & $x$ & $1$ & $x+e^x$ & $2e^x - 1 + \frac{x^3}{6}$ & $t-x$ & $2.34 \times 10^{-5}$ \\
Volterra & $[0,1]$ & $x$ & $0$ & $\sin(x)$ & $\sin(x) - x\cos(x)$ & $-t$ & $7.85 \times 10^{-4}$ \\
Volterra & $[0,1]$ & $x^2$ & $0$ & $e^x$ & $e^{2x} - e^{x}$ & $-e^{x-t}$ & $3.29 \times 10^{-4}$ \\
Volterra & $[0,1]$ & $x$ & $1$ & $e^x$ & $1$ & $1$ & $1.11 \times 10^{-5}$ \\
Volterra & $[0,1]$ & $x^2$ & $1$ & $e^x$ & $e^x - \frac{1}{2}\left(e^{2x} - 1\right)$ & $1$ & $2.30 \times 10^{-5}$ \\
Volterra-Fredholm & $[0,1]$ & $x$ & $1$ & $x+e^x$ & $2e^x - \frac{x}{2} - \frac{7}{3} + \frac{x^3}{6} + x e$ & $\mathcal{K}_f = \mathcal{K}_v = t-x$ & $1.95 \times 10^{-5}$ \\
Volterra-Fredholm & $[0,1]$ & $x$ & $1$ & $xe^x$ & $e^x - 1 - x$ & $\mathcal{K}_f = x, \mathcal{K}_v = 1$ & $3.41 \times 10^{-5}$ \\
Abel & $[0,1]$ & $x$ & $0$ & $x$ & $\frac{4}{3}x^{\frac{3}{2}}$ & $\frac{-1}{\sqrt{x-t}}$ & $3.27 \times 10^{-3}$ \\
Abel & $[0,1]$ & $x^3$ & $0$ & $x$ & $\frac{32}{35}x^{\frac{7}{2}}$ & $\frac{-1}{\sqrt{x-t}}$ & $1.58 \times 10^{-3}$ \\
Fredholm & $[0,\infty)$ & $x$ & $1$ & $2e^{-x}$ & $e^{-x}$ & $e^{-(x+t)}$ & $3.71 \times 10^{-5}$ \\ \bottomrule
\end{tabular}
\caption{Examples of one-dimensional Fredholm, Volterra, and Abel-type integral equations over finite and semi-infinite domains, along with the corresponding MAE of their neural network solutions.}
\label{tbl:1d}
\end{table}

\subsection{Integro-differntial Integral Equations}
In this section, we validate the proposed method for integral equations that involve differentiation operators. We examine three types of differentiation: ordinary, partial, and fractional. For ordinary and partial derivatives, we use automatic differentiation to compute the derivatives of the unknown solution with respect to the inputs. For fractional derivatives, we apply the operational matrix of fractional differentiation as described in Theorem \ref{thm:matrix}.

\subsubsection{Ordinal Integro-differential Equations}
As the first experiment, consider the following form of an ordinal integro-differential equation:
\begin{equation*}
    \kappa \frac{d^v}{dx^v}u(x) = \mathcal{S}(x) + \int_\Delta \mathcal{K}(x,t) \zeta(u(t)) \, dt,
\end{equation*}
where $v \in \mathbb{Z}^+$ denotes the differentiation order. For all configurations of this problem, we apply two boundary conditions with known data from the exact solution. Other configurations will follow the approach used in the previous section on one-dimensional integral equations. Table \ref{tbl:oide} presents various cases of this problem simulated using our method, along with the corresponding absolute error. The results demonstrate that the proposed method is highly accurate, even for cases where the derivatives of the unknown solution appear under the integral sign.

\begin{table}[ht]
\begin{tabular}{@{}llllllll@{}}
\toprule
Type & $\zeta(x)$ & $v$ & $\kappa$ & Exact & Source term & Kernel & MAE \\ \midrule
Fredholm & $x$ & $2$ & $1$ & $e^x$ & $1-e+e^x$ & $1$ & $2.74 \times 10^{-7}$ \\
Fredholm & $x$ & $2$ & $1$ & $\sin(x)$ & $\cos(x) - 1 + \cos(1)$ & $1$ & $1.03 \times 10^{-6}$ \\
Fredholm & $x$ & $2$ & $1$ & $e^x + x$ & $\frac{1}{2} - e + e^x$ & $1$ & $3.19 \times 10^{-6}$ \\
Fredholm & $x^2$ & $1$ & $1$ & $x$ & $\frac{5}{4} - \frac{x^2}{3}$ & $(x^2-t)$ & $2.05 \times 10^{-7}$ \\
Volterra & $x^{'}$ & $0$ & $0$ & $\cosh(x)+x$ & $e^x + \frac{1}{2}x^2 - 1$ & $-(x-t+1)$ & $4.65 \times 10^{-5}$ \\
Volterra & $x^2+x^{'}$ & $0$ & $0$ & $\sin(x)$ & $\frac{7}{8} + \frac{1}{4}x^2 - \cos(x) + \frac{1}{8}\cos(2x)$ & $-(x-t)$ & $1.11 \times 10^{-4}$ \\
Volterra & $x$ & $2$ & $1$ & $e^x$ & $1+x$ & $(x-t)$ & $2.15 \times 10^{-7}$ \\
Volterra & $x^2$ & $1$ & $1$ & $1+e^{-x}$ & $\frac{9}{4} - \frac{5}{2} x - \frac{1}{2} x^2 - 3 e^{-x} - \frac{1}{4} e^{-2x}$ & $(x-t)$ & $3.48 \times 10^{-6}$ \\
Volterra-Fredholm & $x$ & $1$ & $1$ & $2+6x$ & $9-5x-x^2-x^3$ & $x-t$ & $3.67 \times 10^{-5}$ \\
Volterra-Fredholm & $x$ & $1$ & $1$ & $x e^x$ & $2 e^x - 2$ & $1$ & $6.82 \times 10^{-6}$ \\ \bottomrule
\end{tabular}
\caption{Simulation results for ordinal integro-differential equations using the proposed neural network approach on the interval $\Delta = [0,1]$.}
\label{tbl:oide}
\end{table}

\subsubsection{Partial Integro-differential Equations}
For the next experiment, we examine a two-dimensional unknown function governed by the following partial integro-differential equation:
\begin{equation*}
    \frac{\partial}{\partial t}u(x,t) = \mathcal{S}(x,y) + \int_{\Delta_t} \mathcal{K}(x,t,s) \zeta(u(x,s)) \, ds,
\end{equation*}
where $x \in \Delta_x = [0,1]$ and $t \in \Delta_t = [0,1]$. Table \ref{tbl:pide} presents simulations of various examples of this equation. For each configuration, we determine the source term based on the given exact solution. The initial condition is set as $\hat{u}(x,0) = u(x,0)$ for the exact solution in all scenarios.

\begin{table}[ht]
\begin{tabular}{@{}llllll@{}}
\toprule
Type & $\zeta(x)$ & Exact & Source term & Kernel & MAE \\ \midrule
Fredholm & $x$ & $\sin(x  t)$ & $x  \cos(y  x) + \frac{-1 + \cos(x)}{x}$ & $1$ & $3.07 \times 10^{-4}$ \\
Fredholm & $x$ & $\sin(x  t)$ & $x  \cos(y  x) - x + x  \cos(x)$ & $x^2$ & $5.05 \times 10^{-5}$ \\
Fredholm & $x$ & $\sin(x  t)$ & $x  \cos(y  x) - x  \sin(y) + x  \sin(y)  \cos(x)$ & $x^2 \sin(y)$ & $3.84 \times 10^{-5}$ \\
Fredholm & $x$ & $\sin(x  t)$ & $x  \cos(y  x) + y  \frac{x  \cos(x) - \sin(x)}{x}$ & $x y s$ & $5.03 \times 10^{-5}$ \\
Fredholm & $x^2$ & $\sin(x  t)$ & $x  \cos(y  x) + \frac{\cos(x)  \sin(x) - x}{2x}$ & $1$ & $6.46 \times 10^{-5}$ \\
Volterra & $x$ & $\sin(x  t)$ & $x  \cos(y  x) + \frac{-1 + \cos(x^2)}{x}$ & $1$ & $1.04 \times 10^{-4}$ \\
Volterra & $x$ & $e^{x-t}$ & $-e^{x-t} + 1 - e^{x}$ & $1$ & $6.39 \times 10^{-6}$ \\
\bottomrule
\end{tabular}
\caption{Numerical results from simulating partial integro-differential equations using the proposed architecture.}
\label{tbl:pide}
\end{table}

\subsubsection{Fractional Integro-differential Equations}
For the last experiment in this section, we consider the well-known fractional Volterra model for species population growth in a closed system, as formulated in \cite{afzal2024hyperparameter}:
\begin{equation*}
    \begin{aligned}
        \kappa^{C}\mathfrak{D}^\alpha u(x) &= u(x) - u(x)^2 + u(x) \int_0^x u(t) \, dt, \\
        u(0) &= 0.1,
    \end{aligned}
\end{equation*}
where $\kappa$ is a known non-dimensional parameter, and $_{}^{C}\mathfrak{D}^\alpha$ denotes the Caputo fractional differentiation operator. Using Theorem \ref{thm:matrix}, we have trained a neural network with an appropriate loss function to approximate both the fractional derivatives and the Volterra operator.

We observed that the L-BFGS algorithm does not always converge to the optimal solution. To improve convergence, we first used the Adam optimizer to obtain a good initial weight matrix, and then fine-tuned the weights with the L-BFGS algorithm to achieve more accurate predictions. Since this problem lacks a known analytical solution, we do not report the MAE for this case. However, TeBeest \cite{tebeest1997classroom} has shown that the problem has a unique maximum point, which for $\alpha = 1$ is given by:
\begin{equation*}
    u_{\text{max}} = 1 + \kappa \ln \left(\frac{\kappa}{1 + \kappa - u(0)}\right).
\end{equation*}
In Table \ref{tbl:volterra-population}, we report this maximum value obtained from our proposed method and compare it with TeBeest's results. The simulation outcomes are also visualized in Figure \ref{fig:volterra-population}.

\begin{table}[ht]
\centering
\begin{tabularx}{\textwidth}{@{}cXXXcX@{}}
\toprule
$\kappa$ & $x_{max}$ & $\hat{u}_{max}$ & $u_{max}$ & $x_{max}$ & $u_{max}$ \\ \midrule
 & \multicolumn{3}{c}{$\alpha=1$} & \multicolumn{2}{c}{$\alpha=0.5$} \\ \cmidrule(lr){2-4} \cmidrule(lr){5-6}
0.10 & 0.4745475 & 0.7697309 & 0.7697415 & 0.1505151 & 0.7588339 \\
0.20 & 0.8210821 & 0.6590433 & 0.6590503 & 0.3030303 & 0.6357327 \\
0.30 & 1.1191119 & 0.5840919 & 0.5841117 & 0.4845485 & 0.5483513 \\
0.40 & 1.3846385 & 0.5285197 & 0.5285380 & 0.6625663 & 0.4766903 \\
0.50 & 1.6246625 & 0.4851788 & 0.4851903 & 0.8500850 & 0.4233432 \\
0.60 & 1.8466847 & 0.4502213 & 0.4502255 & 1.0031004 & 0.3807877 \\
0.70 & 2.0507052 & 0.4212958 & 0.4213250 & 1.1596160 & 0.3481358 \\ \bottomrule
\end{tabularx}
\caption{The accuracy criterion for Volterra's population model, considering both ordinal and fractional order derivatives. The values of $u_{\text{max}}$ are obtained following the method described in \cite{tebeest1997classroom}.}
\label{tbl:volterra-population}
\end{table}

\begin{figure}[ht]
    \centering
    \begin{subfigure}[b]{0.47\textwidth}
        \includegraphics[width=\textwidth]{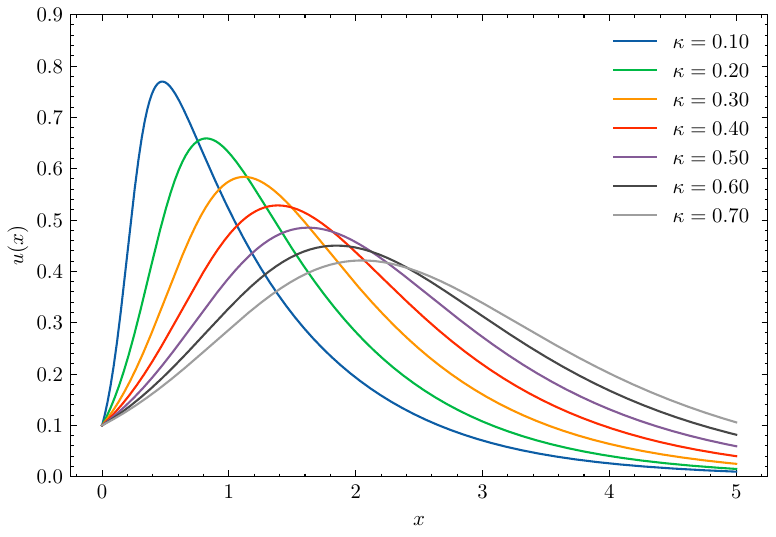}
        \caption{$\alpha=1$}
    \end{subfigure}
    \hfill
    \begin{subfigure}[b]{0.47\textwidth}
        \includegraphics[width=\textwidth]{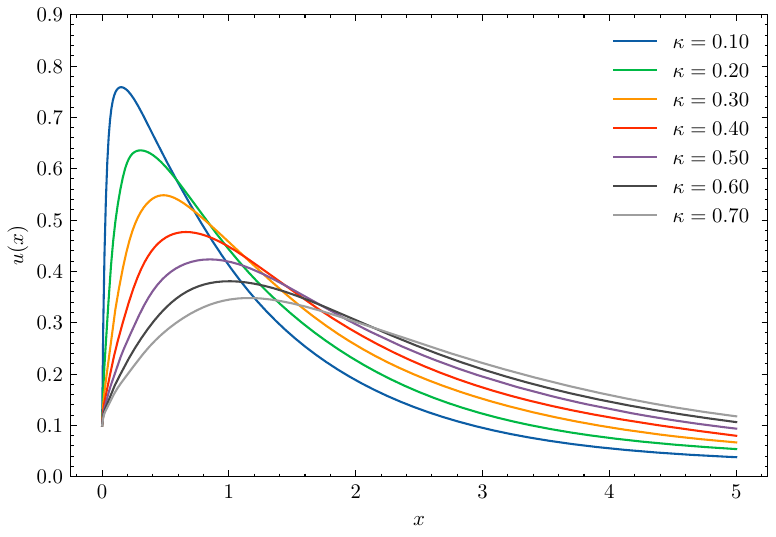}
        \caption{$\alpha=0.5$}
    \end{subfigure}
    \caption{Simulation results of Volterra's population model using the proposed neural network approach for various values of $\kappa$ and different differentiation orders.}
    \label{fig:volterra-population}
\end{figure}

\subsection{Multi-dimensional Integral Equations}
In this section, we evaluate the proposed method for multi-dimensional integral equations. Specifically, we consider a two-dimensional IE of the form:
\begin{equation*}
    \kappa u(x,y) = \mathcal{S}(x,y) + \int_{\Delta_y}\int_{\Delta_x} \mathcal{K}(x,y,s,t) u(s,t) \, dt \, ds,
\end{equation*}
and a three-dimensional IE:
\begin{equation*}
    \kappa u(x,y,z) = \mathcal{S}(x,y,z) + \int_{\Delta_z}\int_{\Delta_y}\int_{\Delta_x} \mathcal{K}(x,y,z,r,s,t) u(r,s,t) \, dt \, ds \, dr.
\end{equation*}
In both cases, the IEs can be classified as either Fredholm or Volterra and as either first-kind or second-kind equations. We apply the formulations presented in Section \ref{sec:3} and simulate various IEs with different kernel functions and exact solutions. Table \ref{tbl:nd} summarizes these examples and presents the simulation results obtained using the proposed neural network method, with the number of training points ranging from 15 to 30 for each dimension.

\begin{table}[ht]
\begin{tabular}{@{}llllll@{}}
\toprule
Type & $\Delta$ & Exact& Source Term& Kernel& MAE \\ \midrule
Fredholm & $[0,1] \times [0,2]$ & $x^2y$ & $x^2y + \frac{4}{9}x$ & $-\frac{1}{2}xt$ & $2.25 \times 10^{-4}$ \\
Fredholm & $[0,1] \times [-1,1] \times [1,2]$ & $x^2y e^x$ & $x^2y e^x - \frac{(e-1)}{2}$ & $e^{sr}$ & $1.11 \times 10^{-3}$ \\
Volterra & $[0,1] \times [0,2]$ & $x+y$ & \makecell{$(x + y - 2) e^{2x + 2y} + (2 - y) e^{x + 2y} +$ \\ $(2 - x) e^{2x + y} + x + y - 2 e^{x + y}$} & $e^{x+y+s+t}$ & $1.39 \times 10^{-3}$ \\
Volterra & $[0,1] \times [0,2]$ & $x+y$ & $x + y + \frac{1}{2} e^{(x + y)} \left(y^2 x + x^2 y\right)$ & $e^{x+y}$ & $3.91 \times 10^{-4}$ \\
Volterra & $[0,1] \times [0,2]$ & $x+y$ & $x + y + \frac{1}{2} e^y \left( y^2 x + x^2 y\right)$ & $e^y$ & $3.62 \times 10^{-4}$ \\
Volterra & $[0,1] \times [0,2]$ & $x+y$ & $x + y + \frac{1}{2} e^x \left(y^2 x + x^2 y\right)$ & $e^x$ & $1.99 \times 10^{-4}$ \\
Volterra & $[0,1] \times [0,2]$ & $x+y$ & $x + y + \frac{1}{2} \left(y^2 x + x^2 y\right)$ & $1$ & $5.78 \times 10^{-5}$ \\ \bottomrule
\end{tabular}
\caption{Numerical solutions of multi-dimensional integral equations using the proposed neural network approach.}
\label{tbl:nd}
\end{table}

\subsection{Systems of Integral Equations}
A system of integral equations consists of multiple interconnected unknown functions appearing within integral terms. For an integer \( M \ge 2 \), such a system can be mathematically defined as:
\begin{equation*}
    \kappa \frac{d^v}{dx^v}u_\iota(x) = \mathcal{S}_\iota(x) + \int_\Delta\sum_{i=1}^\iota \mathcal{K}_{\iota,i}(x,t) u_i(t) \, dt.
\end{equation*}
A combination of the previously discussed types of integral equations can be observed in these systems. For instance, when $\kappa = 0$, the system becomes a set of first-kind integral equations, or it converts into integro-differential equations when $v \in \mathbb{Z}^{>0}$. In this section, we consider the case where $M = 2$, leading to the following system of equations:
\begin{equation}
    \begin{cases}
\kappa u_{1}^{(v)}(x) = \mathcal{S}_1(x) + \displaystyle\int_{\Delta} \mathcal{K}_{1,1}(x,t) u_1(t) + \mathcal{K}_{1,2}(x,t) u_2(t) \, dt, \\
\kappa u_{2}^{(v)}(x) = \mathcal{S}_2(x) + \displaystyle\int_{\Delta} \mathcal{K}_{2,1}(x,t) u_1(t) + \mathcal{K}_{2,2}(x,t) u_2(t) \, dt, \\
\end{cases}
\label{eq:system}
\end{equation}
subject to $v$ boundary conditions when $v > 0$. To simulate these systems, we use two distinct neural networks to approximate the unknown functions, $\mathbf{u}_1 = \mathrm{MLP}_1(\mathbf{X})$ and $\mathbf{u}_2 = \mathrm{MLP}_2(\mathbf{X})$ for $\mathbf{X} \in \mathbb{R}^{N \times 1}$. Each network has its own set of weights $\boldsymbol{\theta}$. Table \ref{tbl:system} presents various configurations of the system \eqref{eq:system} along with the results obtained from the neural network simulations. Once again, we observe acceptable accuracy for the problem, even when dealing with stiff solutions over large domains.

\begin{table}[ht]
\resizebox{\textwidth}{!}{%
\begin{tabular}{@{}cccccccccc@{}}
\toprule
Type & $\Delta$ & $\kappa$ & $v$& $\iota$ & Exact& Source term& $K_{\iota,1}(x,t)$& $K_{\iota,2}(x,t)$& MAE \\ \midrule
\multirow{2}{*}{Fredholm} & \multirow{2}{*}{$[0,\pi]$} & \multirow{2}{*}{1} & \multirow{2}{*}{0} & 1 & $\sin(x) + \cos(x)$ & $\sin(x) + \cos(x) - 4x$ & $x$ & $x$ & $1.30 \times 10^{-3}$ \\
 &  &  &  & 2 & $\sin(x) - \cos(x)$ & $\sin(x) - \cos(x)$ & $1$ & $1$ & $2.74 \times 10^{-4}$ \\
\multirow{2}{*}{Volterra} & \multirow{2}{*}{$[0,1]$} & \multirow{2}{*}{1} & \multirow{2}{*}{0} & 1 & $x$ & $x - \frac{1}{6} x^{4}$ & $(x-t)^{2}$ & $x-t$ & $2.04 \times 10^{-5}$ \\
 &  &  &  & 2 & $x^{2}$ & $x^{2} - \frac{1}{12} x^{5}$ & $(x-t)^{3}$ & $(x-t)^{2}$ & $2.28 \times 10^{-5}$ \\
\multirow{2}{*}{Volterra} & \multirow{2}{*}{$[0,1]$} & \multirow{2}{*}{0} & \multirow{2}{*}{0} & 1 & $1+x$ & $\frac{1}{2} x^{2} + \frac{1}{2} x^{3} + \frac{1}{12} x^{4}$ & $-(x-t-1)$ & $-(x-t+1)$ & $1.18 \times 10^{-3}$ \\
 &  &  &  & 2 & $1+x^{2}$ & $\frac{3}{2} x^{2} - \frac{1}{6} x^{3} + \frac{1}{12} x^{4}$ & $-(x-t+1)$ & $-(x-t-1)$ & $1.17 \times 10^{-3}$ \\
\multirow{2}{*}{Fredholm} & \multirow{2}{*}{$\left[0,\frac{\pi}{2}\right]$} & \multirow{2}{*}{1} & \multirow{2}{*}{2} & 1 & $\cos(x)$ & $-\cos(x) - \left(2 - \frac{\pi}{2}\right)$ & $x-t$ & $x-t$ & $3.69 \times 10^{-5}$ \\
 &  &  &  & 2 & $\sin(x)$ & $-\sin(x) + \left(2 - \frac{\pi}{2}\right)$ & $x+t$ & $x+t$ & $8.72 \times 10^{-5}$ \\
\multirow{2}{*}{Volterra} & \multirow{2}{*}{$[0,1]$} & \multirow{2}{*}{1} & \multirow{2}{*}{1} & 1 & $1 + x + x^{2}$ & $1 + x - \frac{1}{2} x^{2} + \frac{1}{3} x^{3}$ & $x-t$ & $x-t+1$ & $1.26 \times 10^{-6}$ \\
 &  &  &  & 2 & $1 - x - x^{2}$ & $-1 - 3x - \frac{3}{2} x^{2} - \frac{1}{3} x^{3}$ & $x-t+1$ & $x-t$ & $3.39 \times 10^{-6}$ \\ \bottomrule
\end{tabular}
}
\caption{Simulation results of neural network approximations for systems of integral equations.}
\label{tbl:system}
\end{table}

\subsection{Optimal Control Problems}
In this section, we consider an optimal control problem where the objective is to minimize the cost functional \( \mathcal{J} \) defined by:
\begin{equation}
    \min \mathcal{J} = \int_\Delta \mathfrak{L}(\chi(t), u(t), t) \, dt,
    \label{eq:opt-func}
\end{equation}
subject to the dynamics given by:
\begin{equation}
    \mathcal{D}(u, \chi)(t) + \mathcal{I}(u, \chi)(t) = \mathcal{S}(t),
    \label{eq:opt-cons}
\end{equation}
along with specified initial and boundary conditions. Here, \( \chi(t) \in \mathbb{R}^M \) represents the state vector at time \( t \), \( u(t) \in \mathbb{R}^\mathfrak{c} \) denotes the control vector, and \( \mathfrak{L}(\chi(t), u(t), t) \) is the running cost function. The objective is to determine the optimal control \( u^*(t) \) such that \( \mathcal{J}(u^*) = \min_{u \in \mathcal{U}} \mathcal{J}(u) \), where \( \mathcal{U} \) denotes the set of admissible controls.

To tackle this problem, we approximate the state and control functions using neural networks: \( \chi_i(x) = \mathrm{MLP}_i(\mathbf{X}) \) and \( u(x) = \mathrm{MLP}(\mathbf{X}) \). The functional \eqref{eq:opt-func}, being a definite integral over \( \Delta \), can be computed using the Gauss-Legendre matrix-vector product:
\begin{equation*}
    \mathbf{J}_{1 \times 1} = \frac{b-a}{2} \odot \mathbf{L}_{N \times 1}^\top \cdot \mathbf{w}_{N \times 1},
\end{equation*}
where \( \Delta = [a,b] \) and $\mathbf{L}_i=\mathfrak{L}(\chi(\mathbf{x}_i), u(\mathbf{x}_i), \mathbf{x}_i)$. The system dynamics described by Equation \eqref{eq:opt-cons} are typically represented as differential or integro-differential equations, which can be simulated as outlined in previous sections. The loss function is then given by:
\begin{equation*}
    \mathcal{L}(\mathbf{x}) = \mathbf{J} + \gamma \left[ \frac{1}{N \times M} \sum_{i=1}^M \mathfrak{R}_i(\mathbf{x})^\top \mathfrak{R}_i(\mathbf{x}) + \sum_{i=1}^M \sum_{j=0}^{\#B} \{\chi_i^{(j)}(a) - B_{i,j}\}^2 \right],
\end{equation*}
for $M$ different residual functions of $M$ constraints. In this formulation, $\gamma$ is the regularization or trade-off parameter that balances the influence of the optimal control objective against the constraints. We will provide a comprehensive discussion of this parameter in Example \ref{ex:opt-delay}. For now, we begin by testing several optimal control problems using the proposed approach.

\begin{example}
\label{ex:opt-ord-1}
Consider the following optimal control problem:
\begin{equation*}
\begin{aligned}
\min &\quad \mathcal{J} = \int_{0}^{1} \left( u(t)^2 + \chi(t)^2 \right) dt, \\
\text{s.t.} &\quad u(t) = \frac{d}{dt} \chi(t),\\
&\quad \chi(0) = 0, \quad \chi(1) = \frac{1}{2},
\end{aligned}
\end{equation*}
with the exact solution \cite{kafash2012application}:
\begin{equation*}
\begin{aligned}
    u(t) &= \frac{e \left( e^t + e^{-t} \right)}{2e^2 - 2},\\
    \chi(t) &= \frac{e \left( e^t - e^{-t} \right)}{2e^2 - 2},
\end{aligned}
\end{equation*}
which yields the optimal cost \( \mathcal{J} = 0.328259 \). Using the proposed approach, we conducted simulations with $100$ training points derived from the roots of Legendre polynomials. The setup included two MLP networks with an architecture of $[1,10,10,1]$ and hyperbolic tangent as the activation function, using $\gamma = 10^3$. The results of this simulation are presented in Table \ref{tbl:opt-cont}.

\end{example}

\begin{example}
\label{ex:opt-ord-2}
Consider this optimal control problem \cite{kafash2012application}:
\begin{equation*}
\begin{aligned}
\min &\quad \mathcal{J} = \frac{1}{2} \int_{0}^{1} \left[ u(t)^2 + \chi(t)^2 \right] \, dt \\
\text{s.t.} &\quad u(t) = \frac{d}{dt} \chi(t) + t,
\end{aligned}
\end{equation*}
and the initial condition \( \chi(0) = 1 \). The exact solution to this problem, which gives \( \mathcal{J} = 0.192909 \), is given by \( \chi(t) = \kappa \exp(\sqrt{2} t) + (1 - \kappa) \exp(-\sqrt{2} t) \) and \( u(t) = \kappa (\sqrt{2} + 1) \exp(\sqrt{2} t) - (1 - \kappa) (\sqrt{2} - 1) \exp(-\sqrt{2} t) \), where
\begin{equation*}
    \kappa := \frac{2\sqrt{2} - 3}{-e^{2\sqrt{2}} + 2\sqrt{2} - 3}.
\end{equation*}
Table \ref{tbl:opt-cont} presents the results of the neural network simulation for this problem, utilizing the same hyperparameters as those used in Example \ref{ex:opt-ord-1}.

\end{example}

\begin{example}
\label{ex:opt-frac}
In this example, we evaluate the performance of the proposed neural network model on an optimal control problem subject to fractional constraints:
\begin{equation*}
\begin{aligned}
\min & \quad \mathcal{J} = \int_{0}^{1} \left[ \left(\chi_1(t) - 1 - t^\frac{3}{2}\right)^2 + \left(\chi_2(t) - t^\frac{5}{2}\right)^2 + \left(u(t) - \frac{3\sqrt{\pi}}{4}t + t^\frac{5}{2}\right)^2 \right] dt, \\
\text{s.t.} &\quad \frac{d^\frac12}{dt^\frac12} \chi_1(t) = \chi_2(t) + u(t), \\
 &\quad \frac{d^\frac12}{dt^\frac12} \chi_2(t) = \chi_1(t) + \frac{15\sqrt{\pi}}{16} t^2 - t^\frac{3}{2} - 1,
\end{aligned}
\end{equation*}
with initial conditions \( \chi_1(0) = 1 \) and \( \chi_2(0) = 0 \). The exact solution to this problem is given by \( \chi_1(t) = 1 + t^{\frac{3}{2}} \), \( \chi_2(t) = t^{\frac{5}{2}} \), and \( u(t) = \frac{3\sqrt{\pi}}{4} t - t^{\frac{5}{2}} \). The analytical solution to this problem yields \( \mathcal{J} = 0 \) in the optimal case. Table \ref{tbl:opt-cont} reports the results of the neural network simulation of this problem in which $N=2000$ and $\gamma=10$.

\end{example}

\begin{example}
    \label{ex:opt-delay}
    For the next experiment, we consider an optimal control problem with a delay differential equation constraint \cite{rabiei2023hybrid}:
\begin{equation*}
\begin{aligned}
\min &\quad \mathcal{J} = \frac{1}{2} \int_{0}^{2} \left( u(t)^2 + \chi(t)^2 \right) dt, \\
\text{s.t.} &\quad \frac{d}{dt} \chi(t) = u(t) + \chi(t-1), \quad &t \in [0, 2],\\
&\quad \chi(t) = 1, \quad &t \in [-1, 0].
\end{aligned}
\end{equation*}
This problem presents two complexities. First, it has no known exact solution, so we cannot directly evaluate our results. Second, it is sensitive to the parameter \( \gamma \). Unlike previous examples, in this case, it is challenging to find a reasonably smooth function for \( u(t) \). To illustrate how the parameter \( \gamma \) affects both the optimal value of the problem and the residual value of the constraints, we have plotted the relationship between these two in Figure \ref{fig:opt-gamma}. It can be observed that as \( \gamma \) increases, the constraints become more accurate, while the minimum value of \( \mathcal{J} \) increases. The intersection of these two graphs can be seen as an optimal value of \( \gamma \) that balances the minimization of both the objective functional and the constraints' residuals. Therefore, we select \( \gamma = 750 \) with \( 200 \) training points and present the simulation results for this problem in Table \ref{tbl:opt-cont}.

\begin{figure}[ht]
    \centering
    \includegraphics[width=8cm]{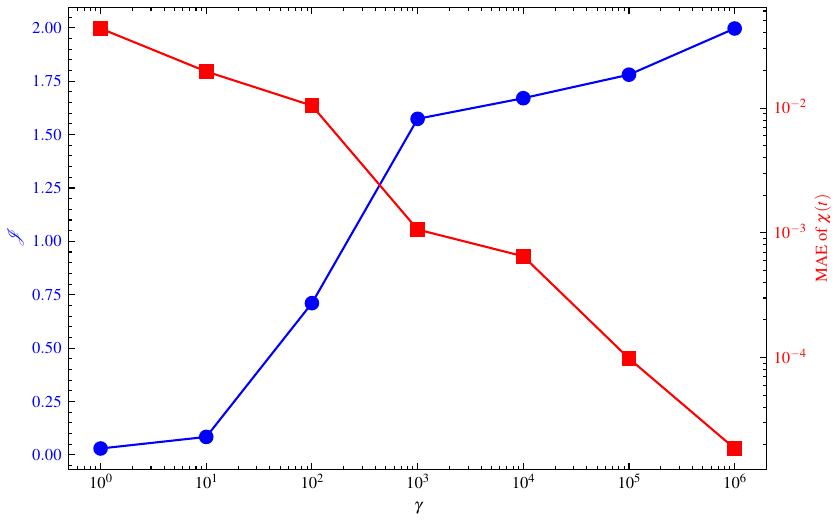}
    \caption{The trade-off between the residual of the state variable and the system's cost for Example \ref{ex:opt-delay}. Increasing the value of $\gamma$ shifts the focus more towards satisfying the constraints while reducing the emphasis on the control variable. This leads to a decrease in the MAE of $\chi(t)$ but results in a higher overall cost. The intersection point of the two line graphs can be considered an optimal value for $\gamma$.}
    \label{fig:opt-gamma}
\end{figure}
\end{example}

\begin{example}
\label{ex:opt-nonlinear}
For the next example, we consider a nonlinear optimal control problem of the form \cite{rabiei2023approach}:
\begin{equation*}
\begin{aligned}
\min \quad & \mathcal{J} = \frac{1}{2}\int_{0}^{1} \left[ u(t)^2 + \frac{5}{4}\chi(t)^2 + \chi(t)u(t) \right] dt \\
\text{s.t.} &\quad  \frac{d}{dt} \chi(t) = \frac{1}{2}\chi(t) + u(t), \\
&\quad \chi(0) = 1.
\end{aligned}
\end{equation*}
The exact solution to this problem is given by \( u(t) = {-(\tanh(1 - t) + 0.5) \cosh(1 - t)}/{\cosh(1)} \) and \( \chi(t) = {\cosh(1 - t)}/{\cosh(1)} \). The optimal value for this problem is \( \mathcal{J} = 0.380797077 \). Table \ref{tbl:opt-cont} reports the results of the neural network simulation for this problem using $\gamma=10^4$ and $N=500$.

\end{example}

\begin{example}
    \label{ex:opt-ide}
In this example, we consider an optimal control problem with an integro-differential equation constraint \cite{heydari2023orthonormal}:
\begin{equation*}
        \begin{aligned}
            \min \quad & \mathcal{J} = \int_0^1 \left[\chi(t) - e^{t^2}\right]^2 + \left[u(t) - \left(2t + 1\right)\right]^2 \, dt,\\
            \text{s.t.} &\quad  \frac{d}{dt} \chi(t) = u(t) - \chi(t) + \int_0^t \left(t(2t + 1) e^{s(t-s)} \chi(s)\right) ds,\\
            &\quad \chi(0) = 1.
        \end{aligned}
    \end{equation*}
The exact solution to this problem is given by \(\chi(t) = \exp(t^2)\) and \(u(t) = 2t + 1\), which yields the optimal value \(\mathcal{J} = 0\). To formulate the constraints, we use the Volterra operator approximation technique outlined in Section \ref{sec:3} and train the network with 100 training points using \( \gamma = 10^3 \). The results of this neural network simulation are reported in Table \ref{tbl:opt-cont}.

\end{example}

\begin{example}
    \label{ex:opt-2d}
For a more complex optimal control problem, consider the following two-dimensional problem \cite{hassani2020generalized}:
\begin{equation}
        \begin{aligned}
            \min \quad & \mathcal{J} =  \int_0^1 \int_0^1 \left[\left(\chi(s,t) - t^4 \sin(s)\right)^2 + \left(u(s,t) - t^3 \cos(s)\right)^2 \right] ds \, dt, \\
            \text{s.t.} &\quad  \frac{\partial \chi(s, t)}{\partial t} = \cos(\chi(s, t)) + 2 \sin(s) \frac{\partial \chi(s, t)}{\partial s} + \frac{\partial^2 \chi(s, t)}{\partial s^2} \\
            &\quad \quad \quad \quad + 6 \sin(s) u(s, t) - \cos(t^4 \sin(s)) - t^3 \left(t \sin(2s) - t \sin(s) + 3 \sin(2s)\right) \\
            &\quad \quad \quad \quad + 4 \sin(s) t^3, \\
            &\quad \chi(s,0) = 0, \\
            &\quad \chi(0,t) = 0.
        \end{aligned}
    \end{equation}
The exact solution to this problem is \( \chi(s,t) = t^4 \sin(s) \) and \( u(s,t) = t^3 \cos(s) \), yielding the analytical objective value \( \mathcal{J} = 0 \). Table \ref{tbl:opt-cont} presents the results of the neural network simulation for this problem. We used a nested Gauss-Legendre method to approximate the cost functional and automatic differentiation to compute the partial derivatives of the unknown solution with respect to \( t \). In this case, \( N = 25 \) training points were employed for each dimension.

\end{example}

\begin{table}[ht]
\centering
\begin{tabularx}{\textwidth}{@{}XXXXXX@{}}
\toprule
Example & Type & Best $\mathcal{J}$ & Simulated $\mathcal{J}$ & Function & MAE \\ \midrule
\multirow{2}{*}{Example \ref{ex:opt-ord-1}} & \multirow{2}{*}{Ordinal} & \multirow{2}{*}{$0.328259$}& \multirow{2}{*}{$0.326641$}& $u$ & $5.35 \times 10^{-3}$ \\
 &  &  &  & $x_1$ & $7.12 \times 10^{-4}$ \\
\multirow{2}{*}{Example \ref{ex:opt-ord-2}} & \multirow{2}{*}{Ordinal} & \multirow{2}{*}{$0.192909$}& \multirow{2}{*}{$0.192904$}& $u$ & $4.59 \times 10^{-3}$ \\
 &  &  &  & $x_1$ & $6.22 \times 10^{-4}$ \\
\multirow{3}{*}{Example \ref{ex:opt-frac}} & \multirow{3}{*}{Fractional} & \multirow{3}{*}{$0.0$}& \multirow{3}{*}{$0.000001$} & $u$ & $8.63 \times 10^{-4}$ \\
 &  &  &  & $x_1$ & $2.46 \times 10^{-4}$ \\
 &  &  &  & $x_2$ & $1.13 \times 10^{-4}$ \\ 
 \multirow{2}{*}{Example \ref{ex:opt-delay}} & \multirow{2}{*}{Delay} & \multirow{2}{*}{$1.647874^{*}$} & \multirow{2}{*}{$1.548422$}& $u$ & - \\
 &  &  &  & $x_1^*$ & $1.45 \times 10^{-3}$ \\
\multirow{2}{*}{Example \ref{ex:opt-nonlinear}} & \multirow{2}{*}{Non-linear} & \multirow{2}{*}{$0.380797$}& \multirow{2}{*}{$0.380797$}& $u$ & $7.43 \times 10^{-3}$ \\
 &  &  &  & $x_1$ & $7.53 \times 10^{-4}$ \\
 \multirow{2}{*}{Example \ref{ex:opt-ide}} & \multirow{2}{*}{Integro-differential} & \multirow{2}{*}{$0.0$}& \multirow{2}{*}{$0.000049$}& $u$ & $6.14 \times 10^{-3}$ \\
 &  &  &  & $x_1$ & $6.58 \times 10^{-4}$ \\
  \multirow{2}{*}{Example \ref{ex:opt-2d}} & \multirow{2}{*}{2D} & \multirow{2}{*}{$0.0$}& \multirow{2}{*}{$0.000016$}& $u$ & $3.94 \times 10^{-3}$ \\
 &  &  &  & $x_1$ & $1.22 \times 10^{-4}$ \\
 \bottomrule
\end{tabularx}
\caption{Simulation results of the proposed neural network model for various optimal control problems are presented. (*) The mean absolute error (MAE) of the residual function is reported for the delay optimal control problem, as no exact solution is available. The value of \(\mathcal{J}\) for this case has not been computed analytically; instead, the reported value is derived from numerical simulations as provided in \cite{rabiei2023hybrid}.}
\label{tbl:opt-cont}
\end{table}

\subsection{Inverse Problems}
In previous sections, we explored the application of neural networks to forward integral equations and optimal control problems. This section focuses on the inverse problem, where we aim to infer the underlying dynamics and governing equations of complex systems from observed data.

To demonstrate how the proposed method can be applied to the inverse form of mathematical equations involving integral operators, we consider the following integral equations with the unknown parameter \(\kappa\in \mathbb{R}\):
\begin{align}
    u(x) &= \mathcal{S}(x) + \kappa \int_0^1 \exp(2 t) \, u(t) \, dt \tag{EX.5}, \\
    u(x) &= \mathcal{S}(x) + \kappa \int_0^x t^2 \, u(t) \, dt \tag{EX.6},
\end{align}
where \(\mathcal{S}(x)\) is a known function. The exact solutions to these problems are \(x^3 + x\) and \(\cos(x)\), respectively. To simulate this problem and determine the unknown \(\kappa\) as well as \(u(x)\), we assume that the system dynamics are given at \(x = [0, 0.25, 0.5, 0.75, 1]\). We then construct the physics-informed loss function as described in \eqref{eq:loss} and train the network until convergence. We simulated these problems with five different values of \(\kappa\). In all experiments, we observed satisfactory convergence in both the system dynamics and the unknown parameter. Figure \ref{fig:data-driven} presents the simulation results.

\begin{figure}[ht]
    \centering
    \includegraphics[width=1\textwidth]{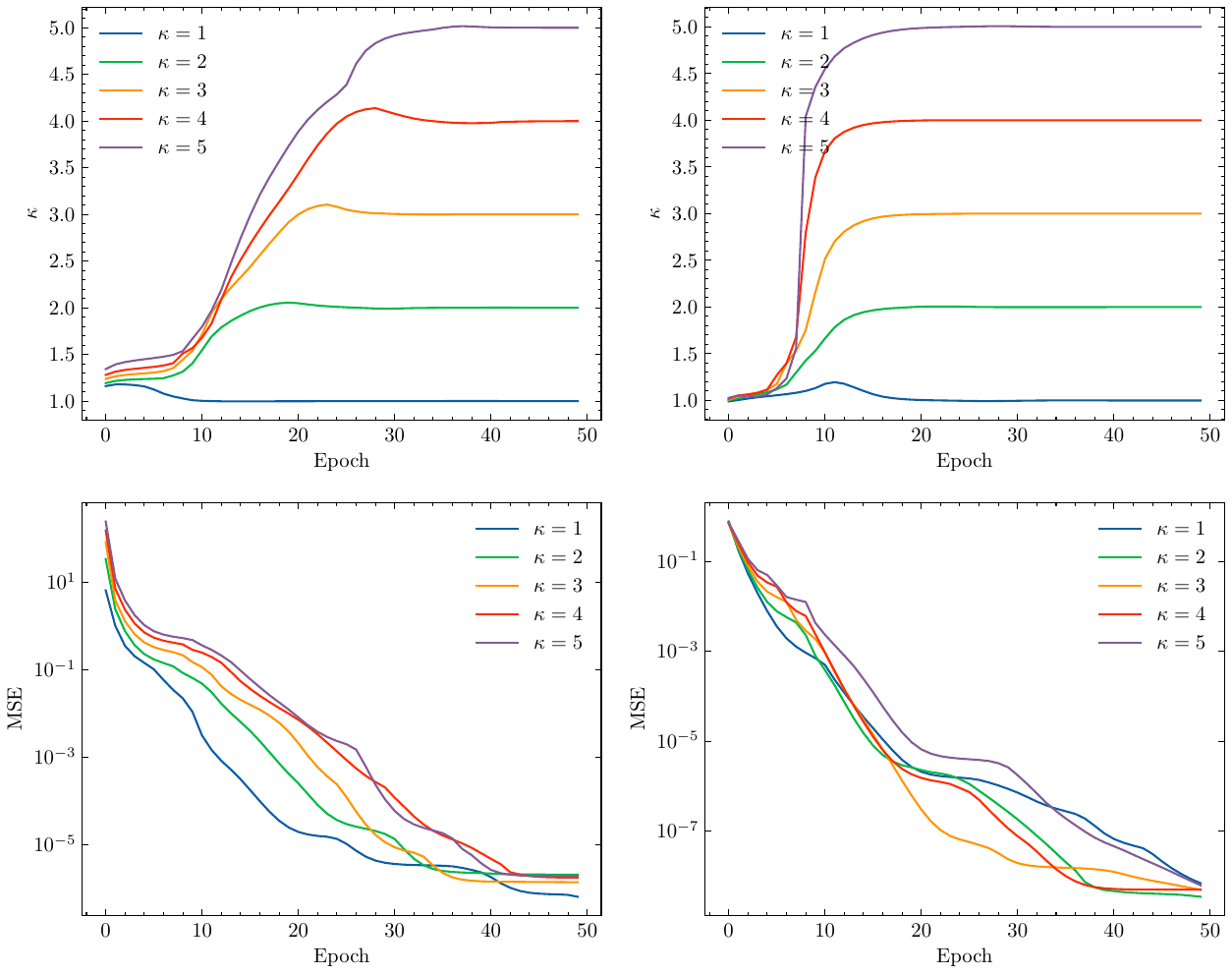}
    \caption{Data-driven solutions for problems \textbf{(EX.5)} (left) and \textbf{(EX.6)} (right). The network may require different numbers of epochs to converge to the exact value for each \( \kappa \). In both cases, the network accurately identifies the unknown function \( u(x) \).}
    \label{fig:data-driven}
\end{figure}

To tackle a more complex problem, we consider the following non-linear Volterra fractional integro-differential equation:
\begin{equation*}
    ^{C}\mathfrak{D}^\frac{1}{2} u(x) = \mathcal{S}(x) + \kappa(x) \int_0^x t \left[\frac{d}{dx}u(x)\right]^2 \, dt,
\end{equation*}
where the exact solution is given by:
\begin{equation*}
    u(x) = -3 + 2x - \frac{4(x - 2)^3}{3} + \frac{4(x - 2)^5}{15} - \frac{8(x - 2)^7}{315} + \frac{(x - 2)^9}{945}.
\end{equation*}
In this problem, we assume that both $\kappa(x)$ and the initial condition are unknown. To investigate the simulation of this inverse problem, we generate a set of $50$ random data points from the exact solution over the domain $\Delta = [0,4]$, with added white noise at a fraction of $0.08$. We then approximate the solution using a neural network with an architecture of $[1,10,10,1]$, trained on 100 collocation points. The loss function for this task is defined as:
\begin{equation*}
    \mathcal{L}(\mathbf{X}) = \frac{1}{N} \mathfrak{R}(\mathbf{X})^\top \mathfrak{R}(\mathbf{X}) + \lambda^{\text{Data}} \mathrm{MSE}^{\text{Data}},
\end{equation*}
where $\lambda^{\text{Data}} = 1$, and $\mathfrak{R}(\mathbf{X})$ is given by:
\begin{equation*}
    \mathfrak{R}(\mathbf{X}) = \mathcal{M} \cdot \mathbf{u} - \boldsymbol{\kappa} \odot \mathbf{I} - \mathbf{S},
\end{equation*}
with $\mathbf{u} = \mathrm{MLP}(\mathbf{X})$, $\boldsymbol{\kappa}$ as a trainable vector, $\mathbf{I}$ representing the quadrature, and $\mathbf{S} = \mathcal{S}(\mathbf{X})$. 

After training the network, the simulation results are shown in Figure \ref{fig:inverse}. It is evident that the proposed method effectively identifies the unknown dynamics of the problem, even when the unknown parameters are complex.

\begin{figure}[ht]
    \centering
    \includegraphics[width=0.9\textwidth]{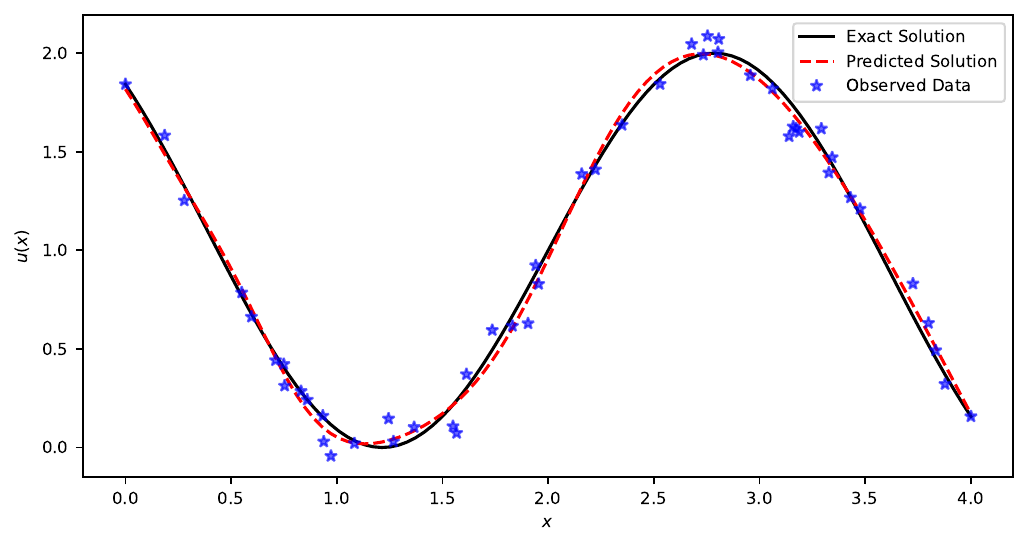}
    \caption{The simulation result for an inverse fractional Volterra integro-differential equation.}
    \label{fig:inverse}
\end{figure}
\section{Python Package}
To enhance the usability of the proposed method, we developed a Python package called \texttt{pinnies} for simulating problems involving integral operators with neural networks \footnote{\url{https://github.com/alirezaafzalaghaei/pinnies}}. You can install this package using the command \texttt{pip install pinnies}. 

For demonstration, we use a one-dimensional Volterra integro-differential equation of the second kind:
\begin{equation}
    \begin{aligned}
    u'(x) &+ u(x) = \int_0^x e^{t-x} \zeta(u(t)) \, dt, \\
    u(0) &= 1, 
\end{aligned}
\label{eq:library}
\end{equation}
where \(\zeta(x) = x\), and the exact solution is \(u(x) = \exp(x) \cosh(x)\). To implement this problem using the `pinnies` package, you should define a class as follows:

\begin{minted}{python}
from pinnies import Volterra
class TestProblem(Volterra):
    def __init__(self, domain, num_train, model):
        # Initialize the TestProblem instance with the domain, number of training points, and model
        super().__init__(domain, num_train)  # Call the parent class constructor
        self.a, self.b = domain  # Set the domain boundaries
        self.model = model  # Store the machine learning model
        self.K = torch.exp(self.T - self.X)  # Define the kernel for the Volterra operator

    def residual(self):
        y = self.predict(self.x)  # Get model prediction at self.x
        y_x = self.diff(y, self.x, n=1)  # Compute the first derivative of the prediction

        u_t = self.predict_u_t()  # Predict the function u for at self.T
        zeta = u_t  # apply possible non-linearity
        
        I = self.quad(self.K * zeta, self.a, self.x)  # Compute the integral from a to self.x

        initial = self.get_initial()  # Get the initial condition residual
        
        return [y_x + y - I, initial]  # Return the final residual

    def get_initial(self):
        zero = torch.tensor([[0.0]])  # Define a tensor for the initial condition at x = 0
        
        # Compute the difference between the model prediction at x = 0 and the expected initial value (1)
        return self.predict(zero) - 1

    def exact(self, x):
        # The exact solution for the problem, used for validation purposes
        return torch.exp(-x) * torch.cosh(x)
\end{minted}

Next, define a neural network architecture and use it to create an instance of the \texttt{TestProblem} class. Solve the problem using the \texttt{solve} method:
\begin{minted}{python}
from torch import nn

model = nn.Sequential(  # Define a simple MLP model
    nn.Linear(1, 10),
    nn.Tanh(),
    nn.Linear(10, 10),
    nn.Tanh(),
    nn.Linear(10, 1),
)

p = TestProblem([0, 5], num_train = 10, model)  # Instantiate the problem with domain, points, and model
p.solve(epochs=30, learning_rate=0.1)  # Solve the problem with specified iterations and learning rate

y_test = p.predict(x_test)  # Test the model with some input data
\end{minted}

The simulation results from this code snippet are shown in Figure \ref{fig:compare-libs}. This figure compares the results with those obtained using the well-known DeepXDE Python package \cite{lu2021deepxde}, which implements the \texttt{IDE} class for solving Volterra-type integro-differential equations. The simulation configuration includes a quadrature degree of $10$, with $10$ training points over the domain $[0,5]$ and a similar MLP architecture of $[1,10,10,1]$. For training the \texttt{IDE}, we use $100$ epochs with a learning rate of $0.1$. As shown, the proposed package is both faster and more accurate, which is advantageous for precise simulations. Additionally, the \texttt{pinnies} package offers other types of operators, such as Fredholm, Volterra-Fredholm, and multi-dimensional operators, which are not available in the \texttt{deepxde} package.

In addition to the DeepXDE package, there is another PINN package called \texttt{NeuralPDE.jl}, implemented in the Julia programming language for physics-informed neural network tasks \cite{zubov2021neuralpde}. However, \texttt{NeuralPDE.jl} has limited support for integro-differential equations, specifically only handling one-dimensional Volterra operators with a fixed kernel function, $\mathcal{K}(x,t)=1$. This limitation makes it incompatible with the requirements of our tests, so we did not include this method in our analysis. Additionally, our experiments showed that \texttt{NeuralPDE.jl} has significantly slower simulation times compared to \texttt{deepxde} and \texttt{pinnies}, further justifying its exclusion from our study.

\begin{figure}[ht]
    \centering
    \includegraphics[width=1\textwidth]{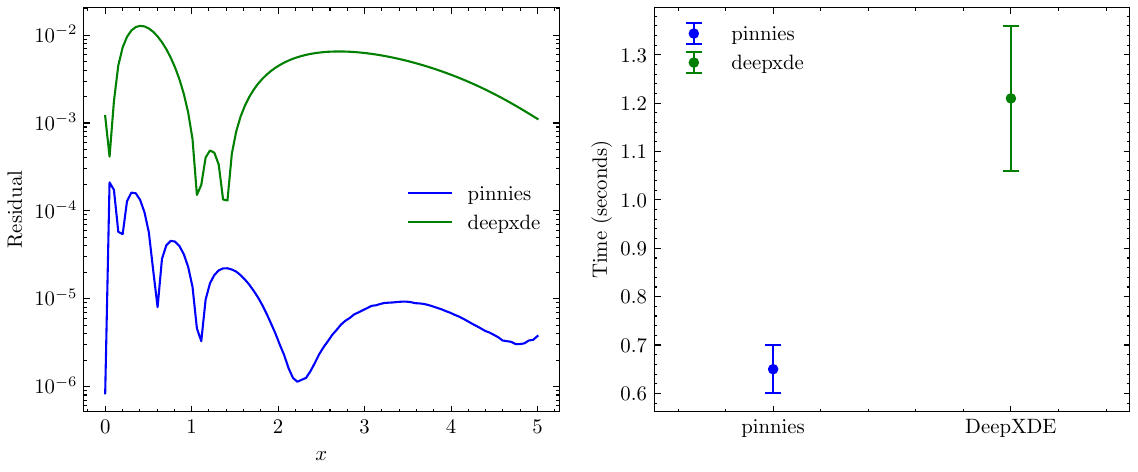}
    \caption{A comparison between the \texttt{pinnies} and \texttt{deepxde} packages for solving a one-dimensional Volterra integro-differential equation, as defined in Equation \eqref{eq:library}, is presented. This comparison shows that the \texttt{pinnies} package not only achieves higher accuracy but also performs faster than \texttt{deepxde}.}
    \label{fig:compare-libs}
\end{figure}

\section{Concluding Remarks}
In this paper, we propose an efficient tensor-vector product approach for the fast and accurate approximation of integral operators commonly found in the mathematical modeling of natural and engineering phenomena. We applied this method to a variety of problems involving integral operators, including integral equations, integro-differential equations, and optimal control problems, potentially involving fractional derivatives. To achieve this, we utilized Gaussian quadrature formulas to approximate the integral operators and automatic differentiation to compute analytical derivatives of integer order.

The simulation results demonstrate that the proposed method is robust against variations in deep learning model hyperparameters, such as network architecture and training data. Additionally, the integration approach is stable and notably fast, outperforming even PyTorch's built-in automatic differentiation ($8.3\pm0.03$ vs. $21\pm0.89$ microseconds). In most cases, the proposed method achieves near-floating-point accuracy within deep learning frameworks, ensuring near-analytical precision in solving the given problems.

We tested our approach for approximating integral terms on various types of integral equations, including Fredholm and Volterra equations of the first and second kinds, in both one-dimensional and multi-dimensional cases, as well as systems of these equations. We also applied it to integro-differential equations of ordinary, partial, and fractional orders. Notably, we demonstrated how a finite difference discretization technique can approximate the fractional Caputo derivative, successfully solving Volterra's population model with acceptable accuracy. Furthermore, we extended the model to address various optimal control problems, including multi-dimensional, fractional, delayed, integro-differential, and nonlinear problems, showing that the model can achieve high accuracy in these cases. In addition to these forward problems, we demonstrated how the proposed method could solve inverse forms of fractional integro-differential equations and remain robust even with noisy data.

Despite its versatility and efficiency, the method has certain limitations. The most significant challenge lies in implementing the proposed method in non-rectangular problem domains due to the inherent constraints of Gaussian quadrature. This issue could be addressed in future work by exploring alternative numerical quadrature methods, such as Monte Carlo approximations. Another promising direction for future research is the application of adaptive integral approximation techniques, which could be particularly beneficial for handling stiff problems. Clenshaw–Curtis quadrature, based on the fast Fourier transform, is another accurate and adaptive numerical integration approach that may offer a viable alternative.

\backmatter

\section*{Conflict of interests}
The authors declare that there are no conflicts of interest.

\section*{Funding}
None.

\section*{Data Availability}
The authors confirm that the data supporting the findings of this study are available within the article.

\section*{CRediT Authorship}
\noindent
\textbf{Alireza Afzal Aghaei}: Formal Analysis, Methodology, Software, Writing – review \& editing\\
\textbf{Mahdi Movahedian Moghaddam}: Writing – original draft, Resources, Software\\
\textbf{Kourosh Parand}: Resources\\

\begingroup
\setstretch{1.0}
\bibliography{sn-bibliography}
\endgroup

\end{document}